\newtheorem{definition}{Definition}[section]
\newtheorem{hypothesis}{Hypothesis}[section]
\newtheorem{theorem}{Theorem}
\DeclareMathOperator{\sign}{sign}
\DeclareMathOperator{\Tr}{Tr}
\DeclareMathOperator{\SVD}{SVD}
\DeclareMathOperator{\diag}{diag}
\begin{document}

\title{On the Needs for Rotations in Hypercubic Quantization Hashing}

\author[1,2]{Anne Morvan \footnote{To whom correspondence should be adressed: anne.morvan@cea.fr. Partly supported by the \textit{Direction G\'en\'erale de l'Armement} (French Ministry of Defense).}}
\affil[1]{CEA, LIST, 91191 Gif-sur-Yvette, France}
\affil[2]{Universit\'e Paris-Dauphine, PSL Research University, CNRS, LAMSADE, 75016 Paris, France}
\author[1]{Antoine Souloumiac}
\author[3]{Krzysztof Choromanski}
\affil[3]{Google Brain Robotics, New York, USA}
\author[1]{C\'edric Gouy-Pailler}
\author[2]{Jamal Atif}

\date{February 12, 2018}

\maketitle

\begin{abstract}
The aim of this paper is to endow the well-known family of hypercubic quantization hashing methods with theoretical guarantees.
In hypercubic quantization, applying a suitable (random or learned) rotation after dimensionality reduction has been experimentally shown to improve the results accuracy in the nearest neighbors search problem. 
We prove in this paper that the use of these rotations is optimal under some mild assumptions: getting optimal binary sketches is equivalent to applying a rotation uniformizing the diagonal of the covariance matrix between data points. Moreover, for two closed points, the probability to have dissimilar binary sketches is upper bounded by a factor of the initial distance between the data points. Relaxing these assumptions, we obtain a general concentration result for random matrices.
We also provide some experiments illustrating these theoretical points and compare a set of algorithms in both the batch and online settings.
\end{abstract}

\section{Introduction}
\label{sec:intro}
Nearest neighbors (NN) search is a key task involved in many machine learning applications such as classification or clustering. For large-scale datasets {\em e.g.} in computer vision or metagenomics, indexing efficiently high-dimensional data becomes necessary for reducing space needs and speed up similarity search. This can be classically achieved by hashing techniques which map data onto lower-dimensional representations. 
Two hashing paradigms exist: data-independent and data-dependent hashing methods.
On the one hand, Locality-Sensitive Hashing (LSH)~(\cite{Andoni2008NHA}) and its variants~(\cite{terasawa2007, indyk2015, CBEICML14}) belong to the data-independent paradigm. They rely on some random projection onto a $c$-lower dimensional space followed by a scalar quantization returning the nearest vertex from the set $\{-1, 1\}^c$ for getting the binary codes (e.g. the $\sign$ function is applied point-wise).
On the other hand, data-dependent methods~(\cite{survey18}) learn this projection from data instead and have been found to be more accurate for computing similarity-preserving binary codes.
Among them, the unsupervised data-dependent hypercubic hashing methods, embodied by ITerative Quantization (ITQ)~(\cite{ITQ2013}), use Principal Component Analysis (PCA) to reduce data dimensionality to $c$: the data is projected onto the first $c$ principal components chosen as the ones with highest explained variance as they carry more information on variability. If we then directly mapped each resulting direction to one bit, each of them would get represented by the same volume of binary code ($1$ bit), although the $c^{th}$ direction should carry less information than the first one. Thus, one can intuitively understand why PCA projection application solely leads to poor performance of obtained binary codes in the NN search task. This is why data get often mixed though an isometry after PCA-projection so as to balance variance over the kept directions. Works from~\cite{Jegou2010} and~\cite{FROSH17} use a random rotation while the rotation can also be learned~(\cite{ITQ2013, IsoHash2012}) to that purpose. This led to the development of variant online techniques such as Online Sketching Hashing (OSH)~(\cite{LengWC0L15}), FasteR Online Hashing (FROSH)~(\cite{FROSH17}), UnifDiag~(\cite{MorvanSGA18}) which are deployable in the streaming setting when large high-dimensional data should be processed with limited memory. These are currently the state-of-the-art of online unsupervised hashing methods for learning on the fly similarity-preserving binary embeddings.    
Nevertheless, even if one can now use these unsupervised online methods for processing high-dimensional streams of data, there is still no theoretical justification that equalizing the variance or in other words choosing directions with isotropic variance leads to optimal results.

\textbf{Contributions:} The main contribution of this paper is to  bring theoretical guarantees to several state of the art quantization-based hashing techniques, by formally proving the need for rotation when the methods rely on PCA-like preprocessing.
We also introduce experiments accompanying the theoretical results and extend ones comparing existing online unsupervised quantization-based hashing methods.

\section{Related work} 
Two paradigms stand for building hash functions: data-independent~(\cite{Andoni2008NHA, SKLSH09, KLSH11}) and data-dependent methods~(\cite{survey18}). 
For the latter category, the binary embeddings are learned from the training set and are known to work better. 
Among this branch of methods, unsupervised methods~(\cite{SH08, Liu11hashingwith, ITQ2013,IsoHash2012, Lee2012SphericalH, DisGraphHashingNIPS2014, CBEICML14, Raziperchikolaei16}) design hash codes preserving distances in the original space while (semi-)supervised ones attempt to keep label similarity~(\cite{SHL12, supervisedHK12}). 
Work from~(\cite{survey18}) proposes an extensive survey of these methods and makes further distinctions to finally show that quantization-based techniques are superior in terms of search accuracy. We can cite: ITerative Quantization (ITQ)~(\cite{ITQ2013}), Isotropic Hashing (IsoHash)~(\cite{IsoHash2012}), Cartesian K-means~(\cite{cartKmeans13}), deep-learning-based methods~(\cite{LaiPLY15, DeepLiong15,Do2016LearningTH}). 
Specifically, when the dataset is too large to fit into memory, distributed hashing can be used~(\cite{HDDleng15}) or
online hashing techniques~(\cite{Huang2013OH, LengWC0L15, AOH15,MIHash17,MorvanSGA18}) can process the data in only one pass, as a continuous stream, and compute binary hash codes as new data is seen. This latter area has attracted lots of interest in the past few years. 
Online Hashing (OKH)~(\cite{Huang2013OH}) learns the hash functions from a stream of similarity-labeled pair of data with a ``Passive-Aggressive" method. Supervised MIHash algorithm~(\cite{MIHash17}) also uses similarity labels between pairs of data and consider Mutual Information for computing the binary embeddings.
On the unsupervised side, in Online Sketching Hashing (OSH)~(\cite{LengWC0L15}), the binary embeddings are learned from a maintained sketch of the dataset with a smaller size which preserves the property of interest. UnifDiag, the recent approach from~\cite{MorvanSGA18}, learns the hash codes from an online estimated principal space and the balancing rotation as a sequence of Givens rotation whose coefficients are computed so as to equalize diagonal coefficients of the PCA-projected data covariance. In the same work, it has also been shown how to adapt IsoHash~(\cite{IsoHash2012}), which is not originally an online technique, to the streaming setting.

\section{Notations and problem statement}

In the sequel, for a given integer $a$, $[a] = \{1, \ldots, a \}$. Let us define the $\sign$ function s.t. for any real $x$, $\operatorname{sign}(x) = 1$ if $x \geq 0$ and $-1$ otherwise. We also use this notation for the same function applied component-wise on coefficients of vectors. $||.||_{F}$ denotes the Froebenius norm. $\Tr(.)$ stands for the Trace application.
$dist_H(.,.)$ returns the Hamming distance.
For any real $a$, $\diag_c(a)$ returns a $c \times c$-diagonal matrix whose diagonal coefficients are all equal to $a$, while for any matrix $M$, $\diag(M)$ returns a diagonal matrix with the same diagonal as $M$. $\mathcal{O}(c)$ is the set of all orthogonal matrices in $\mathbb{R}^{c \times c}$, i.e. $\mathcal{O}(c) = \{ Q \ | \ Q^T Q = Q Q^T = I_c \}$ where $I_c$ is the $c\times c$ Identity matrix. For any matrix $M$, $\Sigma_M = M M^T$. For any vector $z$, $z^{(i)}$ denotes its $i^{th}$ entry.

Let be a stream of $n$ zero-centered data points $\{ x_t \in \mathbb{R}^d\}_{1 \leq t \leq n}$.
The considered hashing methods aim at obtaining binary codes $b_t = \operatorname{sign}(\tilde{W} x_t) \in \{-1, 1\}^{c}$ for $t \in [n]$, where: $c$ denotes the code length, $c \ll d$, and $\tilde{W} \in \mathbb{R}^{c \times d}$ is the dimensionality reduction operator.
In other words, for each bit $k \in [c]$, the hashing function is defined as 
$h_k(x_t) = \operatorname{sign}(\tilde{w}^T_{k} x_t)$ 
where $\tilde{w}_{k}$ are column vectors of hyperplane coefficients.
So $\tilde{w}_{k}^T$ is a row of $\tilde{W} \in \mathbb{R}^{c \times d}$ for each $k$. 
In the framework of hypercubic quantization hashing functions, $\tilde{W} = R W$ where $W$ is the linear dimension reduction embedding applied to data and $R$ is a suitable $c \times c$ orthogonal matrix. 
Typically, one can take $W$ as the matrix whose row vectors $w_{k}^T$ are the $c$ first principal components of the covariance matrix $\Sigma_{X}$ where $X = [x_1, \ldots, x_n] \in R^{d \times n}$. 
Please note that this problem statement includes offline and online methods: $W$ can be either the PCA or a tracked principal subspace as new data is seen. What is more specific to the methods is the way of learning the appropriate orthogonal matrix\footnote{In the sequel, the terms orthogonal matrix or rotation are used equivalently.} for rotating the data previously projected onto this principal subspace.
After application of the (possibly online estimated) PCA algorithm, the stream becomes $\{ v_t \in \mathbb{R}^c\}_{1 \leq t \leq n}$ s.t. $v_t = W x_t$.
Then, for all $t \in [n]$, $y_t = Rv_t$ and $b_t = \sign(y_t)$. 
In the sequel, depending on the context, we will be considering either one data point $x_t$, its initial projection $v_t = Wx_t$, its rotated projection $y_t$ and binary sketch $b_t$, or the whole associated sets $X$, $V = WX$, $Y = RV$ and $B = \sign(Y) = [b_1, \ldots, b_n] \in \{-1, 1\}^{c \times n}$.

\subsection{ITerative Quantization (ITQ)~(\cite{ITQ2013})}

For ITQ, $R$ is the solution of an orthogonal Procustes problem which consists in minimizing quantization error $Q(B,R)$ of mapping resulting data to the vertices of the $2^c$ hypercube:
\begin{equation}
Q(B,R) = || B - \tilde{W}X||^{2}_{F} = || B - RWX||^{2}_{F} = || B - RV||^{2}_{F}.
\end{equation} 
$R$ is initially some random orthogonal matrix. Then, iteratively, ITQ alternatively computes $B = \sign(RWX)$ after freezing $R$, and optimizes $R$ according to $B$. In the latter step, $R = \tilde{S} S^T$ by defining $S, \Omega , \tilde{S}^T = \SVD(B^TV)$ the Singular Values Decomposition (SVD) of $B^TV$. 
Hence, ITQ's goal is to map the values of projected data to their component-wise sign.

\subsection{Methods based on uniformizing diagonal coefficients of covariance matrix}

Let $\sigma^2_1, ..., \sigma^2_c$ be the diagonal coefficients of $\Sigma_{V}$, hence $\sigma^2_1 \geq ... \geq \sigma^2_c$.
The methods described thereafter look for a matrix $R$ balancing variance over the $c$ directions, i.e. equalizing the diagonal coefficients of $\Sigma_Y$ to the same value $\tau = \operatorname{Tr}(\Sigma_V) / c$. 

\subsubsection{Isotropic Hashing (IsoHash)~(\cite{IsoHash2012})}
Let us define, for any real $a$, $\mathcal{T}(a)$, the set of all $c \times c$ matrices with diagonal coefficients equal to $a$: 
$$\mathcal{T}(a) = \{T \in \mathbb{R}^{c \times c} \ | \ \diag(T) = \diag_c(a) \}$$
and $\mathcal{M}(\Sigma_V) = \{Q \Sigma_V Q^T \ | \ Q \in \mathcal{O}(c) \}$. Then, in IsoHash, the chosen way for determining $R$ is to solve the following optimization problem: 
$$R \in \underset{ Q: \ T \in \mathcal{T}(\tau), \ Z \in \mathcal{M}(\Sigma_V)}{ \operatorname{argmin} } || T - Z||_F.$$

One of the proposed methods by IsoHash to solve this problem (\textit{Gradient Flow}) is to reformulate it as the following:
$$
R \in \underset{Q \in \mathcal{O}(c)}{\operatorname{argmin}} \frac{1}{2} || \diag(Q \Sigma_V Q^T) - \diag(\tau)||^2_F.
$$
Then, the rotation results from a gradient descent converging to the intersection between the set of orthogonal matrices and the set of transfer matrices making $\Sigma_Y$ diagonal. 

\subsubsection{UnifDiag Hashing~(\cite{MorvanSGA18})}

With UnifDiag, $\Sigma_{V}$ is dynamically computed as new data is seen while updating $W$ with OPAST algorithm~(\cite{OPAST2000}).
$R$ is defined as a product of $c-1$ Givens rotations $G(i, j, \theta)$.
\begin{definition} \label{def:givens}
A Givens rotation $G(i,j, \theta)$ is a matrix of the form: 
\begin{center}
$G(i,j, \theta) = \begin{bmatrix}
1 & \cdots & 0 & \cdots & 0 & \cdots & 0 \\
\vdots & \ddots &\vdots & & \vdots & & \vdots \\ 
0 & \cdots & c & \cdots & -s & \cdots & 0 \\
\vdots & & \vdots & \ddots & \vdots & & \vdots \\ 
0 & \cdots & s & \cdots & c & \cdots & 0 \\
\vdots & & \vdots & & \vdots & \ddots & \vdots \\ 
0 & \cdots & 0 & \cdots & 0 & \cdots &1
\end{bmatrix}$ 
\end{center}
where $i > j$, $c = \cos(\theta)$ and $s = \sin(\theta)$; 
$\forall k \neq i, \, j, \ g_{k,k} =1$; $g_{i,i} = g_{j,j} = c$, $g_{j,i} = -s$ and $g_{i,j} = s$. All remaining coefficients are set to $0$.
\end{definition} 
Some Givens rotations are iteratively applied left and right to $\Sigma_V$ during the iterative Jacobi eigenvalue algorithm for matrix diagonalization~(\cite{Golub2000}). For $r \in [c-1]$, given $i_r, j_r, \theta_r$,
\begin{align}
{\left(\Sigma_{Y}\right)}_r & \leftarrow G(i_r,j_r, \theta_{r}) \ {\left(\Sigma_{Y}\right)}_{r-1} \ G(i_r,j_r, \theta_{r})^T \label{eq:updateCov} \\
R_{r} &\leftarrow R_{r-1} \ G(i_r,j_r, \theta_r)^T, 
\end{align}
where ${\left(\Sigma_{Y}\right)}_0 = \Sigma_V$, $R_0 = I_c$.
At each step $r$, $i_r$ and $j_r$ are chosen to be the indices of the current smallest and largest diagonal coefficients of ${\left(\Sigma_{Y}\right)}_{r-1}$.$\theta_r$ is computed accordingly so that at the end of step $r$, $r$ diagonal coefficients of ${\left(\Sigma_{Y}\right)}_{r}$ are equal to $\tau$. 

\section{Theoretical justification of optimality of a rotation $\mathbf{R}$ for hypercubic quantization hashing}
We deliver in this Section three results on the theoretical justification of applying $R$ after PCA projection in terms of efficiency of the binary sketches.
We prove in Section~\ref{sec:th_unifdiag}, assuming some distribution on data, 1) the optimality of choosing $R$ as a rotation uniformizing the diagonal of the covariance matrix and 2) we provide some lower bound on the probability of getting different binary sketches for two data points initially close to each other.
3) Then we propose an extension to the case with no assumption on data distribution but some on matrix $R$ in Section~\ref{sec:th_random}. Given an upper bound on the distance between two data points in the initial space, we give a lower bound on the number of bits in common in their binary sketches.
\subsection{When rotation $\mathbf{R}$ uniformizes the diagonal of the covariance matrix} \label{sec:th_unifdiag}

In this section, we assume  $R \in \mathbb{R}^{c \times c}$ is a rotation matrix ($RR^T = R^TR = I_c)$.

\begin{hypothesis}[H1] \label{hyp:H1} We assume:
$\forall t \in [n]$, $(v_t^{(1)}, v_t^{(2)}, \ldots, v_t^{(c)})^T  \sim \mathcal{N}( \textbf{0}, \Sigma^{th}_V)$ s.t. diagonal coefficients of $\Sigma^{th}_V$ are $({\sigma^{th}_1}^2, \ldots, {\sigma^{th}_c}^2)$.
In particular, $\forall t \in [n], \forall i \in [c]$, $v_t^{(i)} \sim \mathcal{N}(0, {\sigma^{th}_i}^2)$. 
$\Sigma^{th}_V$ is not necessary diagonal.
\end{hypothesis} 
Before introducing Th.~\ref{thm:UnifDiagGauss1}, please note that two neighboring data points can have very dissimilar binary codes if their projections on PCA have many coefficients near zero, not on the same side of the hyperplanes delimiting the orthants. Indeed, in this case, the $\sign$ function will attribute opposite bits. Therefore, after dimensionality reduction of the data points in the original space, a good hashing method tends to keep the coefficients of the projected data away from zero, i.e. away from the orthant. The challenge is then to determine how to move these data points away from the orthants.

\begin{theorem} \label{thm:UnifDiagGauss1}
Assume $\{ x_t \in \mathbb{R}^d\}_{1 \leq t \leq n}$ is a stream of $n$ zero-centered data points following Hypothesis~\ref{hyp:H1}.
Then, choosing $R$ so that it uniformizes the diagonal of the covariance matrix $\Sigma^{th}_Y$ is equivalent to minimizing some upper bound on the probability that data points are closed to an hyperplane delimiting an orthant.
\end{theorem}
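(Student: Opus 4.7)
The plan is to make rigorous the intuitive event ``$v_t$ lands close to an orthant boundary'' after rotation, upper-bound its probability using the Gaussian assumption, and then recognize the upper bound as a function of the diagonal of $\Sigma^{th}_Y$ whose minimization over $\mathcal{O}(c)$ is the uniformization problem.

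First, I would set $y_t = R v_t$, which under Hypothesis~\ref{hyp:H1} is zero-mean Gaussian with covariance $\Sigma^{th}_Y = R \Sigma^{th}_V R^T$. Let $\tilde{\sigma}_i^2 := (\Sigma^{th}_Y)_{i,i}$, so that $y_t^{(i)} \sim \mathcal{N}(0,\tilde{\sigma}_i^2)$ marginally. Fixing a margin $\varepsilon > 0$, I would formalize ``$y_t$ is close to an orthant-delimiting hyperplane'' as the event $\mathcal{E}_\varepsilon = \bigcup_{i=1}^c \{ |y_t^{(i)}| \leq \varepsilon \}$, since the orthant boundaries are exactly the coordinate hyperplanes $\{y^{(i)} = 0\}$.

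Next, by a union bound, $\Pr(\mathcal{E}_\varepsilon) \leq \sum_{i=1}^c \Pr(|y_t^{(i)}| \leq \varepsilon)$. For each Gaussian marginal, the density at $0$ is $1/(\sqrt{2\pi}\tilde{\sigma}_i)$ and the density is maximal there, so $\Pr(|y_t^{(i)}| \leq \varepsilon) \leq \sqrt{2/\pi}\,\varepsilon/\tilde{\sigma}_i$. This yields the explicit upper bound
\begin{equation*}
\Pr(\mathcal{E}_\varepsilon) \;\leq\; \sqrt{\tfrac{2}{\pi}}\,\varepsilon \sum_{i=1}^c \frac{1}{\tilde{\sigma}_i}.
\end{equation*}
Minimising this bound over all admissible rotations $R$ therefore reduces to minimising $\sum_i 1/\tilde{\sigma}_i$ under the constraint that $(\tilde{\sigma}_1^2,\dots,\tilde{\sigma}_c^2)$ be the diagonal of $R\Sigma^{th}_V R^T$ for some $R \in \mathcal{O}(c)$. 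Rotations preserve trace, so any such vector satisfies $\sum_i \tilde{\sigma}_i^2 = \Tr(\Sigma^{th}_V) = c\tau$.

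Finally, on the constraint set $\{\tilde{\sigma} \in \mathbb{R}_{>0}^c : \sum_i \tilde{\sigma}_i^2 = c\tau\}$, I would apply the power-mean (or Cauchy--Schwarz) inequality: the quadratic mean dominates the harmonic mean, with equality iff all $\tilde{\sigma}_i$ are equal. Concretely,
\begin{equation*}
\frac{c}{\sum_i 1/\tilde{\sigma}_i} \;\leq\; \sqrt{\frac{\sum_i \tilde{\sigma}_i^2}{c}} \;=\; \sqrt{\tau},
\end{equation*}
so $\sum_i 1/\tilde{\sigma}_i \geq c/\sqrt{\tau}$ with equality iff $\tilde{\sigma}_i^2 = \tau$ for all $i$. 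Since the Schur--Horn theorem guarantees that the constant vector $(\tau,\dots,\tau)$ is attainable as the diagonal of $R\Sigma^{th}_V R^T$ for some orthogonal $R$ (because it is majorised by the spectrum of $\Sigma^{th}_V$), the minimum of the upper bound is realised exactly by the rotations that uniformize the diagonal of $\Sigma^{th}_Y$.

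The main subtlety, and the step I expect to require the most care, is the justification of the Gaussian density bound and the union bound together: the ``equivalence'' in the statement is really between uniformization and minimization of this specific upper bound (the true minimizer of $\Pr(\mathcal{E}_\varepsilon)$ could differ because the coordinates of $y_t$ are correlated when $\Sigma^{th}_V$ is not diagonal). I would therefore state the conclusion as an equivalence with respect to the union-bound surrogate, and rely on Schur--Horn (or the explicit constructions of Sections~4.2 and 4.3 via IsoHash/UnifDiag) to guarantee attainability of the optimum.
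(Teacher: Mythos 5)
Your proposal is correct and follows essentially the same route as the paper's proof: formalize the event as $\bigcup_i \{|y_t^{(i)}|<\epsilon\}$, apply a union bound and the Gaussian marginal bound $\mathbb{P}[|y_t^{(i)}|<\epsilon]\leq \sqrt{2/\pi}\,\epsilon/\tilde{\sigma}_i$, then minimize $\sum_i 1/\tilde{\sigma}_i$ under the trace constraint with equality iff all $\tilde{\sigma}_i$ coincide (the paper chains two Cauchy--Schwarz inequalities where you invoke the quadratic-vs-harmonic mean inequality, which is the same argument). Your two refinements --- bounding the integral by the maximal density rather than a Taylor expansion with a sign claim on the $o(\epsilon^2)$ term, and citing Schur--Horn for attainability of the uniform diagonal --- are welcome tightenings of points the paper leaves informal or implicit.
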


\begin{proof}
Let $\epsilon > 0$.
For $i \in [c]$, $t \in [n]$, let $p_i^\epsilon$ be the probability (independent of $t$) that $y_t^{(i)} = (R v_t)^{(i)}$ is closer than $\epsilon$ from the orthant:
\begin{align*}
p^\epsilon_i 
&= \mathbb{P}[ \ |y_t^{(i)}| < \epsilon \ ] 
= \int^{\epsilon}_{-\epsilon} \frac{1}{\sqrt{2 \pi (R\Sigma^{th}_V R^T)_{ii}}} e^{ \frac{-s^2}{2 (R\Sigma^{th}_V R^T)_{ii}} } ds \\
&= \frac{2 \epsilon}{ \sqrt{2 \pi (R\Sigma^{th}_V R^T)_{ii}}} + \underbrace{o(\epsilon^2)}_{\leq 0} \leq \frac{2 \epsilon}{ \sqrt{2 \pi (R\Sigma^{th}_V R^T)_{ii}}}.
\end{align*}
Hence, 
\begin{align}
\underset{R}{\operatorname{min}} & \ \mathbb{P}[ \ \underset{i \in [c]}{\bigcup}  \left( |y_t^{(i)}| < \epsilon \right) ]
\leq \underset{R}{\operatorname{min}} \underset{i \in [c]}{\sum} p^\epsilon_i \label{eq:ineq_indep_dim} \\
&\leq \underset{R}{\operatorname{min}} \left( \frac{2 \epsilon}{ \sqrt{2 \pi}} \underset{i \in [c]}{\sum}\frac{1}{\sqrt{(R\Sigma^{th}_V R^T)_{ii}}} \right). \notag 
\end{align}
Now, let us define $R^* \in \underset{R}{\operatorname{argmin}} \left( \frac{2 \epsilon}{ \sqrt{2 \pi}} \underset{i \in [c]}{\sum}\frac{1}{\sqrt{(R\Sigma^{th}_V R^T)_{ii}}} \right)$.
We denote for all $i \in [c]$, $\gamma^R_i = \sqrt{(R\Sigma^{th}_V R^T)_{ii}}$ and
$\gamma^R = ( \gamma^R_1, \gamma^R_2, \ldots, \gamma^R_c)^T$. 
Then, Cauchy-Schwartz inequality gives:
$\left\langle \textbf{1}, \gamma^R \right\rangle^2 \leq \left\langle \textbf{1}, \textbf{1} \right\rangle \left\langle \gamma^R, \gamma^R \right\rangle$ \\
i.e. $\left( \underset{i \in [c]}{\sum} \gamma^R_i \right)^2 \leq c \ . \underset{i \in [c]}{\sum} (\gamma^R_i)^2$ which rewrites:
\begin{equation}
\left(  \underset{i \in [c]}{\sum} \gamma^R_i \right)^{-1} \geq c^{-\frac{1}{2}} \ . \left(  \underset{i \in [c]}{\sum} (\gamma^R_i)^2 \right)^{- \frac{1}{2}} \label{eq:cs1bis}
\end{equation}
Besides, $c^2 = \left\langle (\gamma^R)^{ -\frac{1}{2}}, (\gamma^R)^{ \frac{1}{2}} \right\rangle^2 \leq \underset{i \in [c]}{\sum} (\gamma_i^R)^{-1} \ . \underset{i \in [c]}{\sum} \gamma^R_i$ rewrites:
\begin{equation}
\underset{i \in [c]}{\sum} (\gamma_i^R)^{-1} \geq c^2 \ . \left(  \underset{i \in [c]}{\sum} \gamma^R_i \right)^{- 1} \label{eq:cs2bis}
\end{equation}
$R$ is a rotation, $\Tr(R\Sigma^{th}_V R^T) = \Tr(\Sigma^{th}_V) = \underset{i \in [c]}{\sum} {(\sigma^{th}_i)}^2$. 
So, $\underset{i \in [c]}{\sum}(\gamma^R_i)^2 = \underset{i \in [c]}{\sum}(R\Sigma^{th}_V R^T)_{ii} = \Tr(\Sigma^{th}_V)$ is a constant of $R$. 
Then, Eq.~\ref{eq:cs1bis} and~\ref{eq:cs2bis} give: 
\begin{equation}
\underset{i \in [c]}{\sum} (\gamma^R_i)^{-1} \geq c^{ \frac{3}{2}} \ . \left(  \underset{i \in [c]}{\sum} (\gamma^R_i)^2 \right)^{- \frac{1}{2}} =  c^{ \frac{3}{2}} C^{- \frac{1}{2}}
\end{equation}
Minimal value of $\underset{i \in [c]}{\sum} (\gamma^R_i)^{-1}$ is reached if and only if equality holds in the Cauchy-Schwartz inequalities i.e. if and only if, for all $i \in [c]$, $\gamma^R_i$ are equal. Hence, for all $i \in [c]$, $\gamma^{R^*}_i = \gamma^{R^*}_1$. Conversely, if $\gamma^{R^*}_i = \gamma^{R^*}_1$  for all $i \in [c]$, then $R^* \in \underset{R}{\operatorname{argmin}} \left( \frac{2 \epsilon}{ \sqrt{2 \pi}} \underset{i \in [c]}{\sum}\frac{1}{\sqrt{(R\Sigma^{th}_V R^T)_{ii}}} \right)$.
\end{proof}

If moreover $\Sigma^{th}_V$ is diagonal, inequality~\ref{eq:ineq_indep_dim} becomes an equality. Thus, choosing $R$ so that it uniformizes the diagonal of the covariance matrix $\Sigma^{th}_Y$ is exactly equivalent to minimizing the probability that data points are closed to an hyperplane delimiting an orthant plus $o(\epsilon^2)$, where $\epsilon$ is the distance to the orthant.

Please note that in practice the algorithm uniformizes the diagonal coefficients of the \emph{empirical} covariance matrix $\Sigma_V$. This still makes sense because $\frac{1}{n}\Sigma_V$ is a consistent estimator of the theoretical covariance matrix $\Sigma^{th}_V$.

Now we can bound the probability of getting dissimilar sketches for data points close to each other, as stated in Th.~\ref{thm:UnifDiagGauss2} below:
\begin{theorem} \label{thm:UnifDiagGauss2}
Let $x_{t_1} \in \mathbb{R}^{d}$ and $x_{t_2} \in \mathbb{R}^d$ be two data points following Hypothesis~\ref{hyp:H1}, $\epsilon > 0$ so that $|| x_{t_1} -  x_{t_2}||_2 \leq \epsilon$ and $b_{t_1} \in \{ -1, 1\}^{c}$, $b_{t_2} \in \{ -1, 1\}^c$ with $b_{t_i} = \sign(RWx_{t_i})$ for $i \in \{1, 2 \}$.
Then, the probability of getting dissimilar binary sketches is upper bounded as follows:
\begin{equation}
\mathbb{P}[ \ dist_H(b_{t_1}, b_{t_2}) > 0] \leq 2 \epsilon \sqrt{\frac{2}{\pi}} c^{ \frac{3}{2}} \left( \Tr(\Sigma^{th}_V)\right)^{- \frac{1}{2}}. 
\end{equation}
\end{theorem}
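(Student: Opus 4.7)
The plan is to decompose the event $\{\mathrm{dist}_H(b_{t_1},b_{t_2})>0\}$ by union bound over coordinates and reuse, for each coordinate, the one-dimensional Gaussian calculation already done in the proof of Theorem~\ref{thm:UnifDiagGauss1}. The geometric observation driving everything is that a sign-flip on coordinate $i$ together with a small $\ell_2$-perturbation of $y_t = RWx_t$ forces both $y_{t_1}^{(i)}$ and $y_{t_2}^{(i)}$ to lie in a thin slab around zero.

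First, I would estimate the coordinate-wise perturbation. Since the rows of $W$ are orthonormal, $W^TW$ is an orthogonal projector and $\|W u\|_2 \leq \|u\|_2$; since $R$ is orthogonal, $\|R\cdot\|_2=\|\cdot\|_2$. Composing, $\|y_{t_1}-y_{t_2}\|_2 \leq \|x_{t_1}-x_{t_2}\|_2 \leq \epsilon$, and in particular $|y_{t_1}^{(i)}-y_{t_2}^{(i)}| \leq \epsilon$ for every $i\in[c]$. If the two coordinates $y_{t_1}^{(i)}$, $y_{t_2}^{(i)}$ have opposite signs, this pointwise bound forces each of them to satisfy $|y_{t_j}^{(i)}|<\epsilon$. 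Hence
\[
\mathbb{P}[b_{t_1}^{(i)}\neq b_{t_2}^{(i)}] \;\leq\; \mathbb{P}[\,|y_{t_1}^{(i)}|<\epsilon\,] + \mathbb{P}[\,|y_{t_2}^{(i)}|<\epsilon\,] \;=\; 2\,p_i^\epsilon.
\]

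Second, I would import from the proof of Theorem~\ref{thm:UnifDiagGauss1} the Gaussian bound $p_i^\epsilon \leq \tfrac{2\epsilon}{\sqrt{2\pi\,(R\Sigma^{th}_V R^T)_{ii}}} = \tfrac{2\epsilon}{\sqrt{2\pi}}\,(\gamma_i^R)^{-1}$, valid under Hypothesis~\ref{hyp:H1}. A union bound over $i\in[c]$ then yields
\[
\mathbb{P}[\mathrm{dist}_H(b_{t_1},b_{t_2})>0] \;\leq\; \sum_{i\in[c]} 2\,p_i^\epsilon \;\leq\; \frac{4\epsilon}{\sqrt{2\pi}}\sum_{i\in[c]} (\gamma_i^R)^{-1}.
\]
Since $R$ is taken to be the uniformizing rotation, the Cauchy--Schwarz chain from Theorem~\ref{thm:UnifDiagGauss1} is saturated: all $\gamma_i^R$ are equal to $\sqrt{\Tr(\Sigma^{th}_V)/c}$, giving $\sum_i (\gamma_i^R)^{-1} = c^{3/2}\,\Tr(\Sigma^{th}_V)^{-1/2}$. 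Plugging in and simplifying $\tfrac{4}{\sqrt{2\pi}}=2\sqrt{2/\pi}$ delivers the announced inequality.

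I do not expect a genuine obstacle in this argument; everything reduces to Theorem~\ref{thm:UnifDiagGauss1}. The only point that requires care is the ``thin-slab'' step: one must notice that the opposite-sign event is actually contained in the \emph{intersection} $\{|y_{t_1}^{(i)}|<\epsilon\}\cap\{|y_{t_2}^{(i)}|<\epsilon\}$ (which would even save a factor of two), but since the two marginals have identical distribution one typically just writes the union bound, arriving at the factor $2$ present in the stated constant $2\epsilon\sqrt{2/\pi}\,c^{3/2}\,\Tr(\Sigma^{th}_V)^{-1/2}$.
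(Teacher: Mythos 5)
Your proposal is correct and follows essentially the same route as the paper: contract the distance through $W$ and $R$ to get $|y_{t_1}^{(i)}-y_{t_2}^{(i)}|\leq\epsilon$, observe that a sign disagreement forces both coordinates into the slab $(-\epsilon,\epsilon)$, union-bound over coordinates, and invoke the Gaussian density bound and the equalized variances $\gamma_i^R=\sqrt{\Tr(\Sigma^{th}_V)/c}$ from Theorem~\ref{thm:UnifDiagGauss1} to land on $2\epsilon\sqrt{2/\pi}\,c^{3/2}\Tr(\Sigma^{th}_V)^{-1/2}$. The only cosmetic difference is that you apply the union bound coordinate-wise before splitting the intersection into two marginals, whereas the paper splits first and union-bounds after; both orderings yield the same $2\sum_i p_i^\epsilon$.
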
 

\begin{proof}
As PCA performs a projection, one has: $|| v_{t_1} - v_{t_2}||_2 \leq  ||x_{t_1} - x_{t_2} ||_2 \leq \epsilon$.
Then, $|| y_{t_1} - y_{t_2} ||_2 \leq \epsilon$ since $R$ preserves the norm as a rotation. Thus, in particular, for all $i \in [c]$, $|y_{t_1}^{(i)} - y_{t_2}^{(i)}| \leq \epsilon$.
Then, 
\begin{align}
\mathbb{P}[ \ dist_H&(b_{t_1}, b_{t_2}) > 0] = \mathbb{P} [ \underset{i \in [c]}{\bigcup} \left( y_{t_1}^{(i)} y_{t_2}^{(i)} < 0 \right)] \notag \\
& \leq \mathbb{P} [ \underset{i \in [c]}{\bigcup} \left( |y_{t_1}^{(i)}| < \epsilon \ \cap \ |y_{t_2}^{(i)}| < \epsilon \right)] \label{eq:proba_eq1}
\end{align}
because $\left( |y_{t_1}^{(i)}| \geq \epsilon \cup |y_{t_2}^{(i)}| \geq \epsilon \right) \implies \left( y_{t_1}^{(i)} y_{t_2}^{(i)} \geq 0 \right)$. 
Moreover,
\begin{align*}
\mathbb{P} [ \underset{i \in [c]}{\bigcup} \left( |y_{t_1}^{(i)}| < \epsilon \ \cap \ |y_{t_2}^{(i)}| < \epsilon \right)] \\
\leq \mathbb{P} [ \underset{i \in [c]}{\bigcup} |y_{t_1}^{(i)}| < \epsilon ]  + \mathbb{P} [ \underset{i \in [c]}{\bigcup} |y_{t_2}^{(i)}| < \epsilon ] 
\end{align*}
Since $y_{t_1}$ and $y_{t_2}$ have same distribution and using Th.~\ref{thm:UnifDiagGauss1},
\begin{align}
\mathbb{P} &[ \underset{i \in [c]}{ \bigcup} |y_{t_1}^{(i)}| < \epsilon ] + \mathbb{P} [ \underset{i \in [c]}{\bigcup} |y_{t_2}^{(i)}| < \epsilon ] 
= 2 \mathbb{P} [ \underset{i \in [c]}{\bigcup} |y_{t_1}^{(i)}| < \epsilon ] \notag 
\\
&\leq 2 \underset{i \in [c]}{\sum} \mathbb{P} [ \ |y_{t_1}^{(i)}| < \epsilon ] = 2 \epsilon \sqrt{\frac{2}{\pi}} c^{ \frac{3}{2}} \left( \Tr(\Sigma^{th}_V) \right)^{- \frac{1}{2}} \notag 
\end{align}
The result follows.
\end{proof}

\subsection{When matrix $\mathbf{R}$ is random}  \label{sec:th_random}

We now show that if we assume $R$ is a random matrix that resembles a random rotation (i.e. we do not optimize $R$ and furthermore $R$ is not a valid rotation matrix) we can still get strong guarantees regarding the quality of the hashing mechanism (even though not as strong as in the case when $R$ is optimized).
We will assume that $R$ is a Gaussian matrix with entries taken independently at random from $\mathcal{N}(0,1)$. We further make some assumptions regarding the input data and the quality of the PCA projection mechanism. 

\begin{hypothesis}[H2] \label{hyp:two_assumptions}
We assume that the variance of the norm of vectors $x_{t}$ is upper-bounded, namely we assume that $l(1-\delta) \leq \|x_{t}\|_2 \leq l(1+\delta)$ for some fixed $l,\delta > 0$. We also assume that the PCA projection encoded by matrix $W$ satisfies for every $t$: $\|Wx_{t}-x_{t}\|_2 \leq \epsilon \|x_{t}\|_2$ for some $\epsilon > 0$.
The latter condition means that the fraction of the $L_{2}$-norm of the vector lost by performing a PCA projection is upper-bounded (better quality PCA projections are characterized by smaller values of $\epsilon$).
\end{hypothesis}

We now state the main theoretical result of this section:

\begin{theorem}
Let $x_{t_{1}},x_{t_{2}} \in \mathbb{R}^{d}$ be two data points following Hypothesis \ref{hyp:two_assumptions} for some constants $l,\delta,\epsilon > 0$. Then for every $\rho>0$ the following holds: if $\|x_{t_{1}}-x_{t_{2}}\|_2 \leq \rho$ and $b_{t_{1}} \in \{-1,1\}^{c}$ and $b_{t_{2}} \in \{-1,1\}^{c}$ with $b_{t_{1}} = \sign(RWx_{t_{1}})$ and $b_{t_{2}}=\sign(RWx_{t_{2}})$ then for any $\eta>0$ with probability at least $1-e^{-\frac{\eta^{2}c}{2}}$ the number of bits in common between $b_{t_{1}}$ and $b_{t_{2}}$ is lower bounded by $q$ satisfying
\begin{equation}
q = c (1-\eta-\frac{1}{\pi} \arccos(A) )
\end{equation}
with 
\begin{align}
A = \frac{(1- \epsilon)^2 (1 - \delta)^2}{ (1 + \delta)^2} - \frac{\rho^2}{2l^2 (1 + \delta)^2} - \frac{\rho \epsilon}{ l (1 + \delta)} - 2 \epsilon^2
\end{align}
\end{theorem}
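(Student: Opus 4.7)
The plan is to combine three ingredients: (i) the classical Goemans--Williamson / SimHash identity, which translates the agreement probability of a single sign-bit into an angle between the PCA-projected vectors; (ii) Hoeffding-type concentration over the $c$ independent rows of $R$; and (iii) a deterministic lower bound on that angle's cosine, built from Hypothesis~\ref{hyp:two_assumptions}.

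Denote $u = Wx_{t_1}$, $v = Wx_{t_2}$, and let $r_k$ be the $k$-th row of $R$, so that $r_k \sim \mathcal{N}(0, I_c)$. Since the law of $r_k$ is rotationally invariant, the classical hyperplane-rounding computation gives the single-bit agreement probability
\[
\mathbb{P}\!\left[\sign(r_k^{T} u) = \sign(r_k^{T} v)\right] \;=\; 1 - \frac{\theta}{\pi}, \qquad \theta \;=\; \arccos\!\left(\frac{u \cdot v}{\|u\|\,\|v\|}\right).
\]
The rows of $R$ being independent, the $c$ indicators $X_k = \mathbf{1}[b_{t_1}^{(k)} = b_{t_2}^{(k)}]$ are i.i.d.\ Bernoulli with mean $p = 1 - \theta/\pi$, and Hoeffding's (equivalently, Azuma's) inequality applied to the $\{0,1\}$-valued sum $N = \sum_{k=1}^{c} X_k$ yields $\mathbb{P}[N < cp - \eta c] \leq e^{-\eta^2 c/2}$. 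Hence, with probability at least $1 - e^{-\eta^2 c/2}$, we have $N \geq c(1 - \theta/\pi) - \eta c$.

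It remains to produce a deterministic lower bound $\cos\theta \geq A$; once this is established, the monotonicity of $\arccos$ gives $\theta \leq \arccos(A)$, and the previous display becomes $N \geq c(1 - \eta - \arccos(A)/\pi) = q$, which is the stated conclusion. To obtain $A$, I would write $u = x_{t_1} + e_1$ and $v = x_{t_2} + e_2$ with $\|e_i\|_2 \leq \epsilon \|x_{t_i}\|_2$, expand $u \cdot v = x_{t_1} \cdot x_{t_2} + e_1 \cdot x_{t_2} + x_{t_1} \cdot e_2 + e_1 \cdot e_2$, lower-bound the first summand via the polarization identity $x_{t_1} \cdot x_{t_2} = \tfrac{1}{2}(\|x_{t_1}\|^{2} + \|x_{t_2}\|^{2} - \|x_{t_1} - x_{t_2}\|^{2})$ together with $\|x_{t_i}\|_2 \geq l(1-\delta)$ and $\|x_{t_1} - x_{t_2}\|_2 \leq \rho$, and control each of the remaining summands by Cauchy--Schwarz. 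For the denominator I would use that the PCA embedding $W$ is contractive, giving $\|u\|\,\|v\| \leq \|x_{t_1}\|\,\|x_{t_2}\| \leq l^{2}(1+\delta)^{2}$.

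The main obstacle is step (iii): orchestrating the bookkeeping so that the various error contributions collapse into exactly the expression for $A$. In particular, the cross term $-\rho\epsilon/(l(1+\delta))$ suggests that at least one product $e_i \cdot x_{t_j}$ should first be rewritten as $e_i \cdot x_{t_i} + e_i \cdot (x_{t_j} - x_{t_i})$, so that $\rho$ enters linearly (rather than quadratically) in its interaction with $\epsilon$, while the coefficient $-2\epsilon^{2}$ reflects a further $\epsilon^{2}$-type contribution coming from products of two residuals bounded against $\|x_{t_1}\|\|x_{t_2}\|$. Everything else is routine inequality-chasing; the SimHash and concentration pieces are essentially off-the-shelf.
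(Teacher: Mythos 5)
Your ingredients (i) and (ii) coincide with the paper's: the Charikar/Goemans--Williamson identity $\mathbb{P}[b_{t_1}^{(i)}=b_{t_2}^{(i)}]=1-\theta_{v_{t_1},v_{t_2}}/\pi$ (the paper reaches it by projecting $r^{i}$ onto $\mathrm{span}\{v_{t_1},v_{t_2}\}$ and invoking isotropy of the Gaussian, which is the same computation as your rotational-invariance argument), followed by Azuma's inequality over the $c$ independent rows with $a=\eta c$. That part of your plan is complete and correct.

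The gap is step (iii), which you yourself flag as the obstacle, and as sketched it does not collapse to $A$. Expanding $u\cdot v = x_{t_1}\cdot x_{t_2}+e_1\cdot x_{t_2}+x_{t_1}\cdot e_2+e_1\cdot e_2$ and controlling the cross terms by Cauchy--Schwarz (even after the split $e_i\cdot x_{t_j}=e_i\cdot x_{t_i}+e_i\cdot(x_{t_j}-x_{t_i})$) yields, after dividing by $\|u\|\,\|v\|\le l^{2}(1+\delta)^{2}$, a penalty of $-2\epsilon$ from the terms $e_i\cdot x_{t_i}$ and $-2\rho\epsilon/(l(1+\delta))$ from the terms $e_i\cdot(x_{t_j}-x_{t_i})$. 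In $A$, by contrast, the first-order-in-$\epsilon$ contribution is the strictly milder $-2\epsilon(1-\delta)^{2}/(1+\delta)^{2}$, packaged inside $(1-\epsilon)^{2}(1-\delta)^{2}/(1+\delta)^{2}$, and the cross term is $-\rho\epsilon/(l(1+\delta))$. So your route produces a valid but strictly smaller constant $A'<A$ for $\delta>0$, hence a weaker statement than the one claimed. The paper avoids the inner-product expansion entirely: it applies the law of cosines to $v_{t_1},v_{t_2}$, namely $\cos\theta=\bigl(\|v_{t_1}\|_2^{2}+\|v_{t_2}\|_2^{2}-\|v_{t_1}-v_{t_2}\|_2^{2}\bigr)/\bigl(2\|v_{t_1}\|_2\|v_{t_2}\|_2\bigr)$, and uses only three norm bounds from Hypothesis~\ref{hyp:two_assumptions} and the triangle inequality: $\|v_{t_1}-v_{t_2}\|_2\le\rho+2\epsilon(1+\delta)l$, $\|v_{t_i}\|_2\ge(1-\epsilon)(1-\delta)l$, and $\|v_{t_i}\|_2\le\|x_{t_i}\|_2\le(1+\delta)l$, giving $\cos\theta\ge\bigl(2(1-\epsilon)^{2}(1-\delta)^{2}l^{2}-(\rho+2\epsilon(1+\delta)l)^{2}\bigr)/\bigl(2l^{2}(1+\delta)^{2}\bigr)$, whose expansion is the stated $A$ (up to the coefficient of the $\rho\epsilon$ term, which a literal expansion gives as $2\rho\epsilon/(l(1+\delta))$ --- a slip in the paper itself that your decomposition cannot repair either). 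To prove the theorem with the paper's constant, replace your step (iii) by this law-of-cosines argument in the projected space.
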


\begin{proof}
Note that by Hypothesis \ref{hyp:two_assumptions} and triangle inequality, we know that $v_{t_{1}} = Wx_{t_{1}}$ and $v_{t_{2}} = Wx_{t_{2}}$
satisfy 
\begin{align}
\|v_{t_{1}}-v_{t_{2}}\|_2 
&= \|v_{t_{1}} - x_{t_1} - (v_{t_{2}} - x_{t_2}) + x_{t_1} - x_{t_2} \|_2 \notag \\
&\leq \rho + 2\epsilon(1+\delta)l. \label{eq:rho}
\end{align}
Now consider a particular entry $j \in \{1, \ldots,c\}$ of two binary codes: $b_{t_{1}}$ and $b_{t_{2}}$. Note that
$b^{(i)}_{t_{1}} = \sign((r^{i})^{\top}v_{t_{1}})$ and $b^{(i)}_{t_{2}} = \sign((r^{i})^{\top}v_{t_{2}})$, where $r^{i}$ stands for the transpose of the $i^{th}$ row of $R$.
We have: 
\begin{equation}
(r^{i})^{\top}v_{t_{1}} = (r^{i}_{\mathrm{proj}}+r^{i}_{\mathrm{ort}})^{\top}v_{t_{1}}= (r^{i}_{\mathrm{proj}})^{\top}v_{t_{1}},
\end{equation}
where $r^{i}_{\mathrm{proj}}$ stands for the orthogonal projection of the vector $r^{i}$ into a $2$-dimensional linear space spanned by $\{v_{t_{1}},v_{t_{2}}\}$ and 
$r^{i}_{\mathrm{ort}}$ denoted the part of the vector $r^{i}$ that is orthogonal to that subspace.

Similarly, we obtain:
\begin{equation}
(r^{i})^{\top}v_{t_{2}} = (r^{i}_{\mathrm{proj}}+r^{i}_{\mathrm{ort}})^{\top}v_{t_{2}}= (r^{i}_{\mathrm{proj}})^{\top}v_{t_{2}}.
\end{equation}

Therefore we have: $b^{(i)}_{t_{1}} = \sign((r^{i}_{\mathrm{proj}})^{\top}v_{t_{1}})$ and 
$b^{(i)}_{t_{2}} = \sign((r^{i}_{\mathrm{proj}})^{\top}v_{t_{2}})$.

Denote by $\mathcal{L}$ the subset of the two-dimensional subspace spanned by $\{v_{t_{1}},v_{t_{2}}\}$ that consists of these vectors $v \in \mathbb{R}^{c}$ that satisfy: 
$(v^{\top}v_{t_{1}})(v^{\top}v_{t_{2}}) > 0$.

Now note that the projection $r^{i}_{\mathrm{proj}}$ of the Gaussian vector $r^{i}$ into two-dimensional deterministic linear subspace spanned by $\{v_{t_{1}},v_{t_{2}}\}$ is still Gaussian. Furthermore, Gaussian vectors satisfy the isotropic property. The probability that $b^{(i)}_{t_{1}}$ and 
$b^{(i)}_{t_{2}}$ are the same is exactly the probability that $r^{i}_{\mathrm{proj}} \in \mathcal{L}$. From the fact that $r^{i}_{\mathrm{proj}}$ is Gaussian and the isotropic property of Gaussian vectors~(\cite{Charikar2002}) we conclude that:
\begin{equation}
\mathbb{P}[ b^{(i)}_{t_{1}} = b^{(i)}_{t_{2}} ] = \mathbb{P}[r^{i}_{\mathrm{proj}} \in \mathcal{L}] = 1-\frac{\theta_{v_{t_{1}},v_{t_{2}}}}{\pi},
\end{equation}
where $\theta_{v_{t_{1}},v_{t_{2}}}$ stands for an angle between $v_{t_{1}}$ and $v_{t_{2}}$.
Besides, we have: 
\begin{equation}
\cos(\theta_{v_{t_{1}},v_{t_{2}}}) = \frac{ \|v_{t_{1}}\|^{2}_2 +\|v_{t_{2}}\|^{2}_2 - \|v_{t_{1}}-v_{t_{2}}\|^{2}_2}{ 2\|v_{t_{1}}\|_2 \|v_{t_{2}}\|_2 }
\end{equation}
Note that by property of the PCA, one has: 
$||v_{t_1}||_2 \leq ||x_{t_1}||_2$ and $||v_{t_2}||_2 \leq ||x_{t_2}||_2$. 
Moreover, triangle inequality gives:
$\| v_{t_1} \|_2 \geq \|x_{t_1} \|_2 - \| v_{t_1} - x_{t_1} \|_2$.
Thus, $\| v_{t_1} \|_2 \geq (1 - \epsilon)(1 - \delta)l$ by using Hypothesis \ref{hyp:two_assumptions}. 
Similarly, $\| v_{t_2} \|_2 \geq (1 - \epsilon)(1 - \delta)l$.
From the above and by using Eq.~\ref{eq:rho} we get:
\begin{align}
\cos&(\theta_{v_{t_{1}},v_{t_{2}}}) \geq \frac{ 2 (1 - \epsilon)^2 (1 - \delta)^2 l^2 - (\rho + 2 \epsilon (1 + \delta) l)^2}{ 2 l^2(1 + \delta)^2} \notag \\
&\geq \frac{(1- \epsilon)^2 (1 - \delta)^2}{ (1 + \delta)^2} - \frac{\rho^2}{2l^2 (1 + \delta)^2} - \frac{\rho \epsilon}{ l (1 + \delta)} - 2 \epsilon^2
\end{align} %
Hence,
\begin{align} \label{eq:upper_bound_theta}
\theta_{v_{t_{1}},v_{t_{2}}} \leq \arccos &\left(  \frac{(1- \epsilon)^2 (1 - \delta)^2}{ (1 + \delta)^2} - \frac{\rho^2}{2l^2 (1 + \delta)^2} \right. \notag \\
& \left. - \frac{\rho \epsilon}{ l (1 + \delta)} - 2 \epsilon^2 \right).
\end{align}%
Now denote by $X_{i}$ a random variable that is $+1$ if 
$b^{(i)}_{t_{1}}$ and $b^{(i)}_{t_{2}}$ are the same and is $0$ otherwise. Note that the probability that $X_{i}$ is nonzero is exactly 
\begin{equation} \label{eq:proba_p}
p=1-\frac{\theta_{v_{t_{1}},v_{t_{2}}}}{\pi}.
\end{equation}
Note also that different $X_{i}$ are independent since different rows $r^{i}$ of the matrix $R$ are independent.
If we denote by $X$ a random variable defined as:
\begin{equation}
X = X_{1}+ \ldots +X_{c},
\end{equation}
then note that $\mathbb{E}[X]=pc$.
This random variable $X$ counts the number of entries of $b_{t_{1}}$ and $b_{t_{2}}$ that are the same.
Now, using standard concentration results such as Azuma's inequality, we can conclude that for all $a > 0$ :
\begin{equation}
\mathbb{P}[X-pc < -a] \leq e^{-\frac{a^{2}}{2c}}.
\end{equation}
In particular, for $a = \eta c$,
\begin{equation}
\mathbb{P}[X-pc < -\eta c] \leq e^{-\frac{\eta^{2}c}{2}}.
\end{equation}
Then with probability at least $1 - e^{-\frac{\eta^{2}c}{2}}$, $X \geq c( p - \eta)$.
Putting the formula on $p$ from Eq.~\ref{eq:proba_p} and the obtained upper bound on $\theta_{v_{t_{1}},v_{t_{2}}}$ from Eq.~\ref{eq:upper_bound_theta}, we complete the proof.

\end{proof}

\section{Experiments}

Experiments have been carried out on a single processor machine (Intel Core i7-5600U CPU @ 2.60GHz, 4 hyper-threads) with 16GB RAM and implemented in Python (code will be available after publication).

\subsection{Comparison of Hypercubic Quantization Hashing methods for the Nearest Neighbors search task} 

We propose first further experiments in comparison with the recent approach from ~\cite{MorvanSGA18} for more code lengths in batch and online setting. 
Quality of hashing is assessed here on the Nearest Neighbors (NN) search task: retrieved results of the NN search task performed on the $c$-bits codes of hashed data points are compared with the true nearest neighbors induced by the Euclidean distance applied on the initial $d$-dimensional real-valued descriptors.
Mean Average Precision (MAP), commonly used in Information Retrieval tasks, measures then the accuracy of the result by taking into account the number of well-retrieved nearest neighbors and their rank.
Based on the MAP criteria, two types of experiments are presented: 1) UnifDiag algorithm is compared with batch-based methods to show that the streaming constraint does not make loose too much accuracy in NN search results. 2) In the online context, the algorithm is set against to existing online methods in order to exhibit its efficiency.
In both cases, tests were conducted on two datasets: CIFAR-10 (\url{http://www.cs.toronto.edu/~kriz/cifar.html}) and GIST1M (\url{http://corpus-texmex.irisa.fr/}).
CIFAR-10 (CIFAR) contains $60000$ $32 \times 32$ color images equally divided into 10 classes. 960-D GIST descriptors were extracted from those data. GIST1M (GIST) contains $1$ million $960$-D GIST descriptors, from which $60000$ instances were randomly chosen from the first half of the learning set. 
To perform the NN search task, $1000$ queries are randomly sampled and the $59 000$ remaining data points are used as training set. Then, the sets of neighbors and non-neighbors of the queries are determined by a nominal threshold which is arbitrarily chosen to be the average distance to the $600^{th}$ nearest neighbor in the training set ($1\%$ of each dataset). After binary hashing of the query and training sets, MAP at $2000$ is computed over the first $2000$ retrieved nearest neighbors from the training set according to sorted values of Hamming distance between the binary codes. Indeed, we are obviously interested in nearest neighbors returned first, so $MAP@2000$ is enough.
Results are averaged over $5$ random training/test partitions.   

\subsubsection{Comparison with batch-based methods}

In the offline setting, the hashing function is learned over the whole training set and the final MAP is printed for different $c$ value.
Fig.~\ref{fig:batch_comparison} shows MAP results for UnifDiag against unsupervised batch-based methods: ITQ ($K = 50$) and IsoHash (original version preceded by a PCA projection), for an evaluation after having seen the whole training set. The online estimation of the principal subspace via OPAST instead of the classical PCA does not lead to a loss of accuracy, since UnifDiag reaches similar performances to batch-based methods for every code lengths.

\begin{figure}
\centering
\begin{subfigure}{0.44\textwidth}
  \centering
  \includegraphics[width = 0.87654\linewidth]{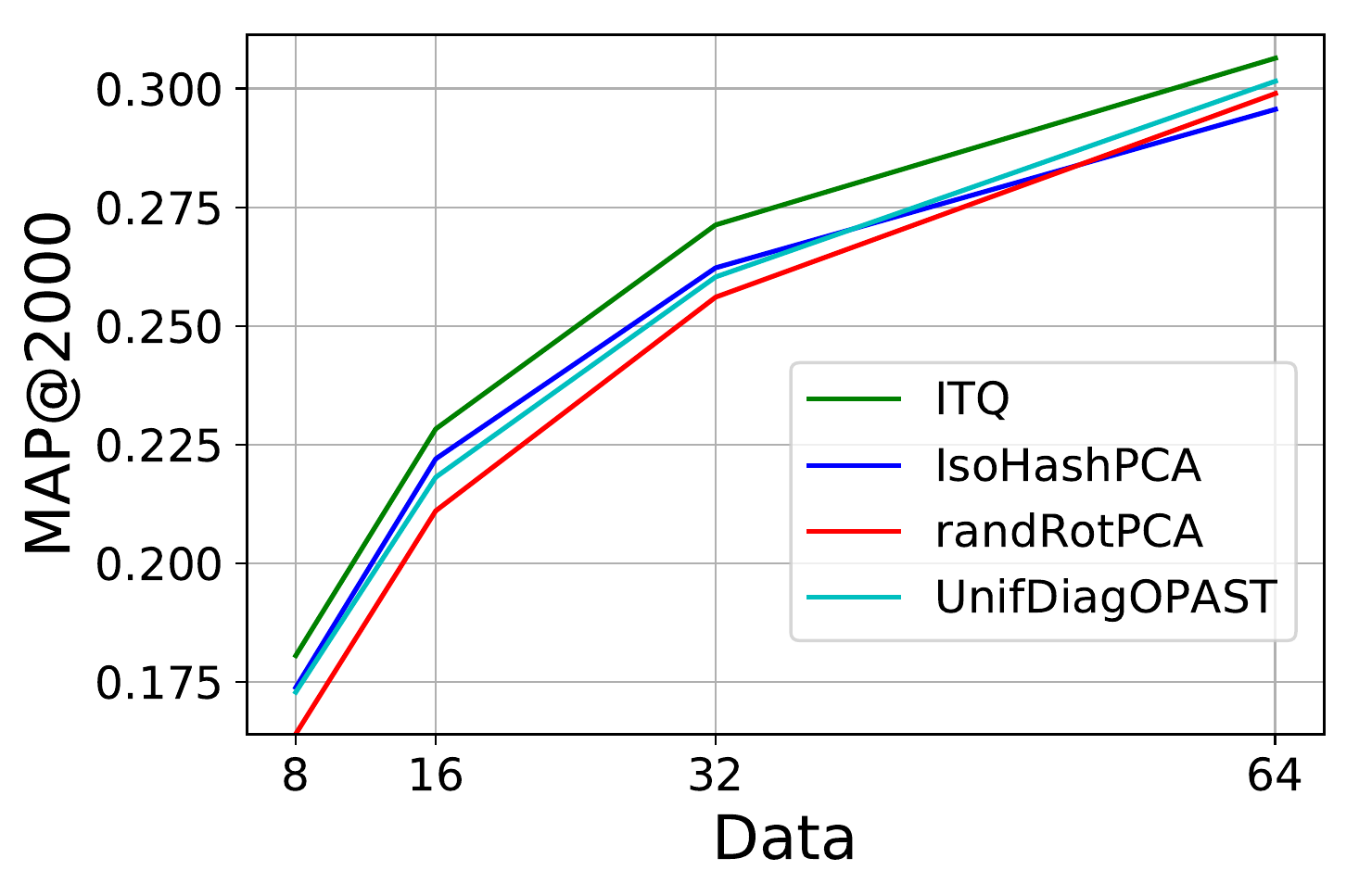}
  \caption{CIFAR}
  \label{subfig:batch_CIFAR}
\end{subfigure}%
\begin{subfigure}{0.44\textwidth}
  \centering
  \includegraphics[width = 0.87654\linewidth]{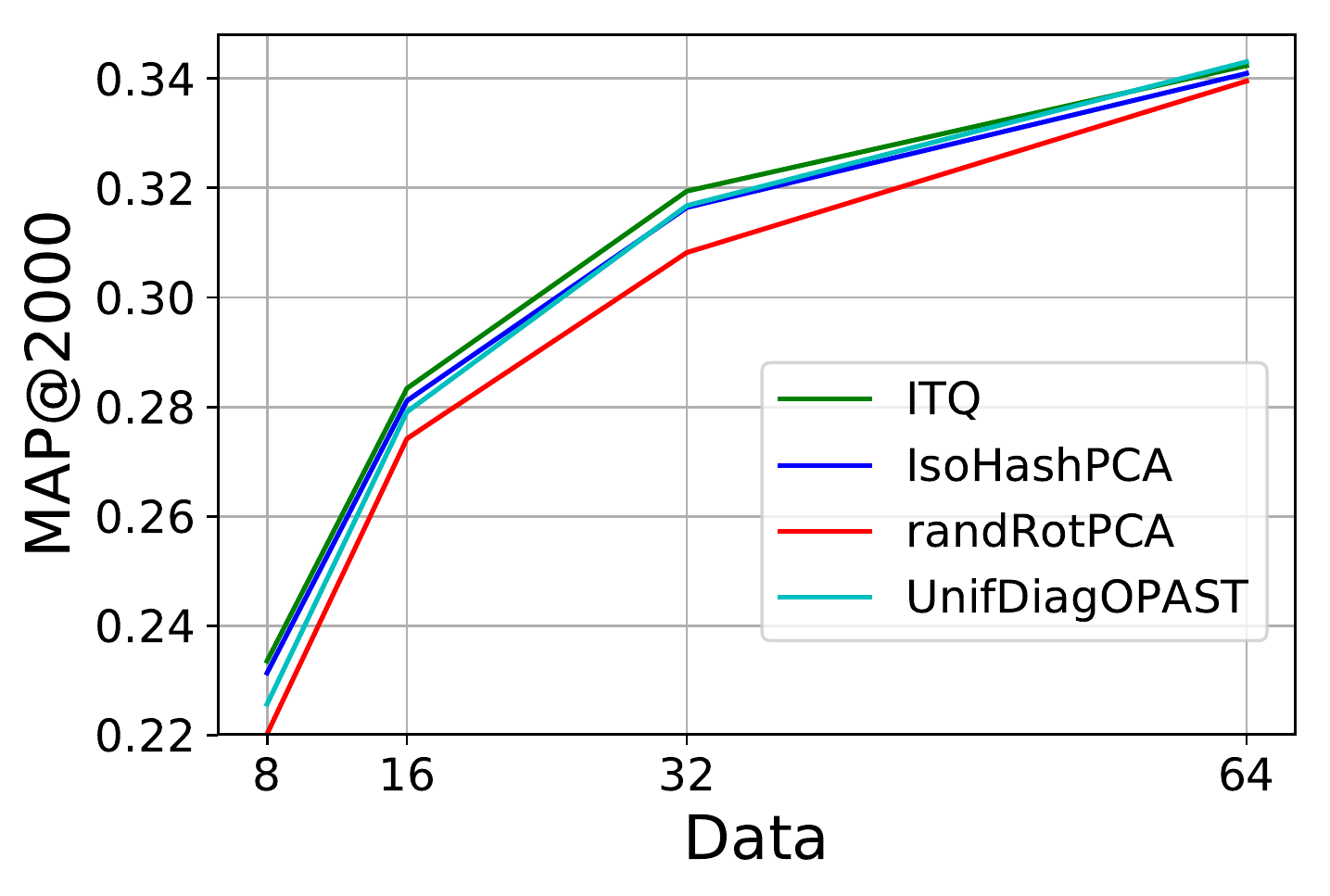}
  \caption{GIST}
  \label{subfig:batch_GIST}
\end{subfigure}
\caption{MAP@2000 in the batch setting for various code lengths.}
\label{fig:batch_comparison}
\end{figure}

\subsubsection{Comparison with online methods}
In the online setting, we print the MAP after every $5$ data points until the $3000$ because a plateau is then reached (but the NN are still computed over the whole training set).
We compared here four unsupervised online baseline methods that follow the basic hashing scheme $\Phi(x_t) = \operatorname{sgn}(\tilde{W}_t x_t)$, where the projection matrix $\tilde{W}_t \in \mathbb{R}^{c \times d}$ is determined according to the chosen method:
1) \textbf{OSH}~(\cite{LengWC0L15}): the number of chunks/rounds is set to $100$ and $l = 200$.
2) \textbf{RandRotOPAST}: $W_t$ is the PCA matrix obtained with OPAST and $R_t$ a constant random rotation.
3) \textbf{IsoHashOPAST}: $R_t$ is obtained with IsoHash. 
4) \textbf{UnifDiagOPAST}~(\cite{MorvanSGA18}).
Fig.~\ref{fig:online_comparison_CIFAR} and \ref{fig:online_comparison_GIST} (best viewed in color) shows the MAP for both datasets for different code length. Not surprisingly, UnifDiagOPAST and the online version of IsoHash with OPAST exhibit similar behavior for both datasets. Moreover, for small values of code length ($c < 64$), UnifDiagOPAST outperforms OSH and randRotOPAST while all have similar results for $c = 64$.

\begin{figure*}
\begin{subfigure}{0.25\textwidth}
  \centering
  \includegraphics[width=\linewidth]{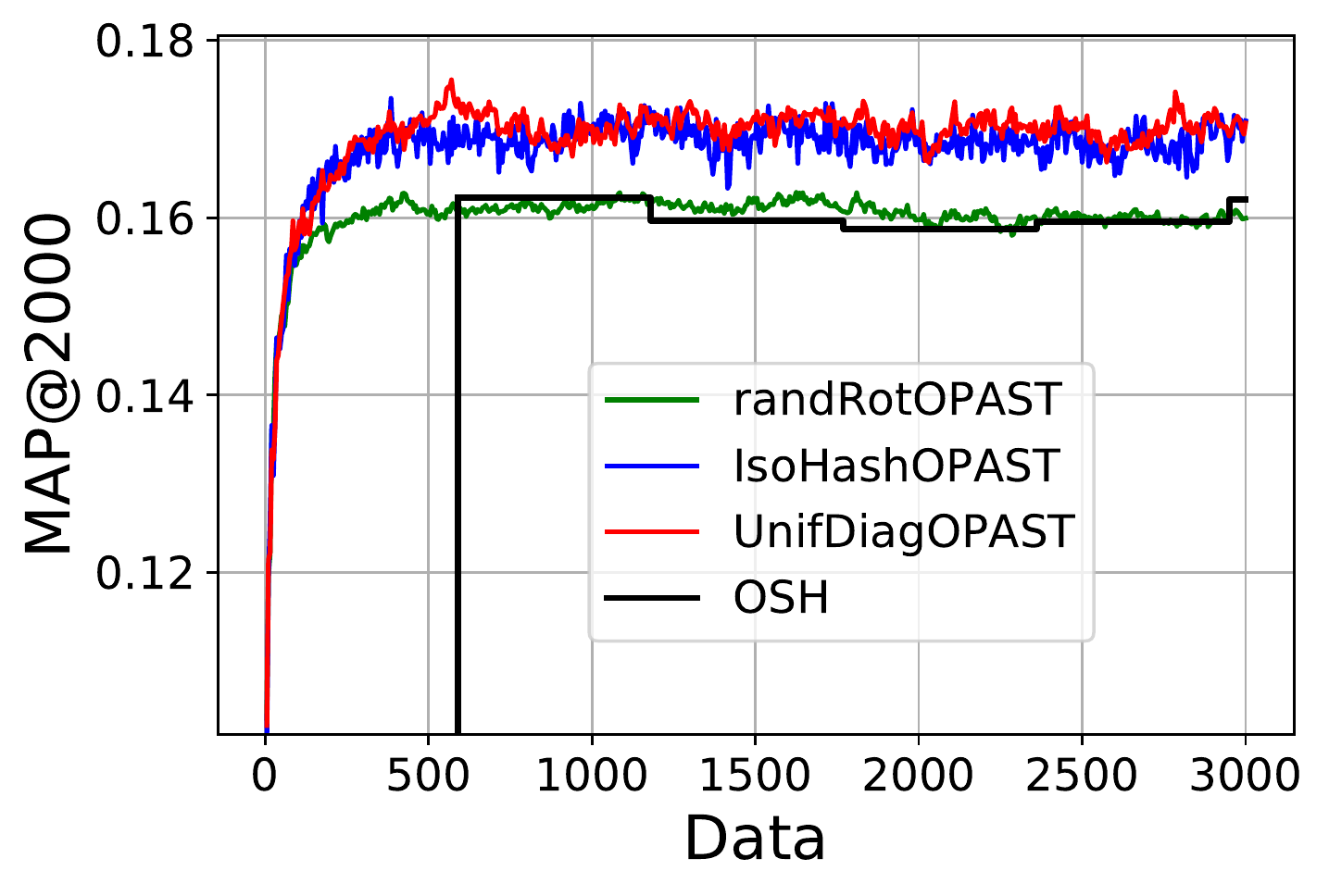}
  \caption{$c = 8$}
  \label{subfig:online_CIFAR_8}
\end{subfigure}%
\begin{subfigure}{0.25\textwidth}
  \centering
  \includegraphics[width=\linewidth]{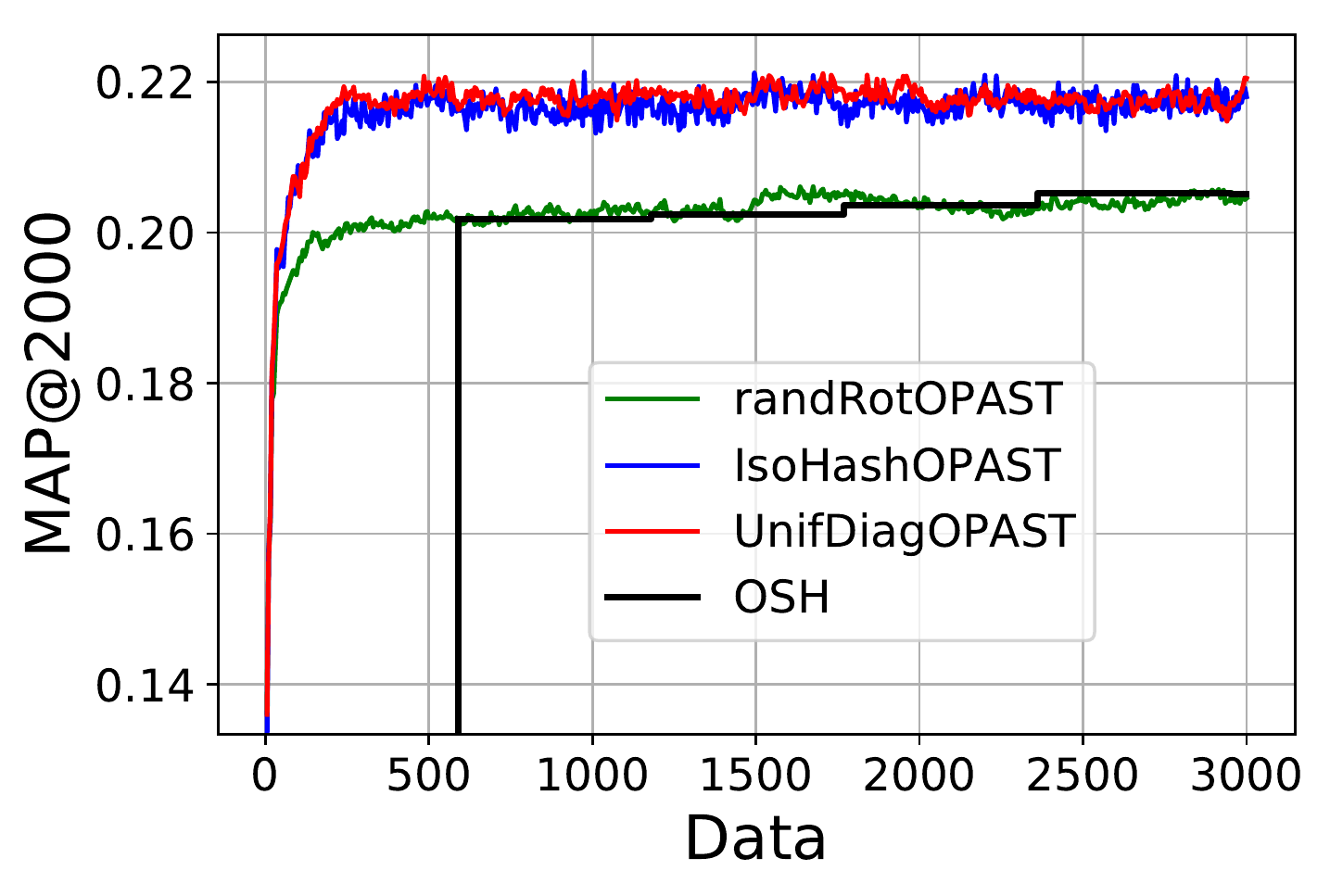}
  \caption{$c = 16$}
  \label{subfig:online_CIFAR_16}
\end{subfigure}%
\begin{subfigure}{0.25\textwidth}
  \centering
  \includegraphics[width=\linewidth]{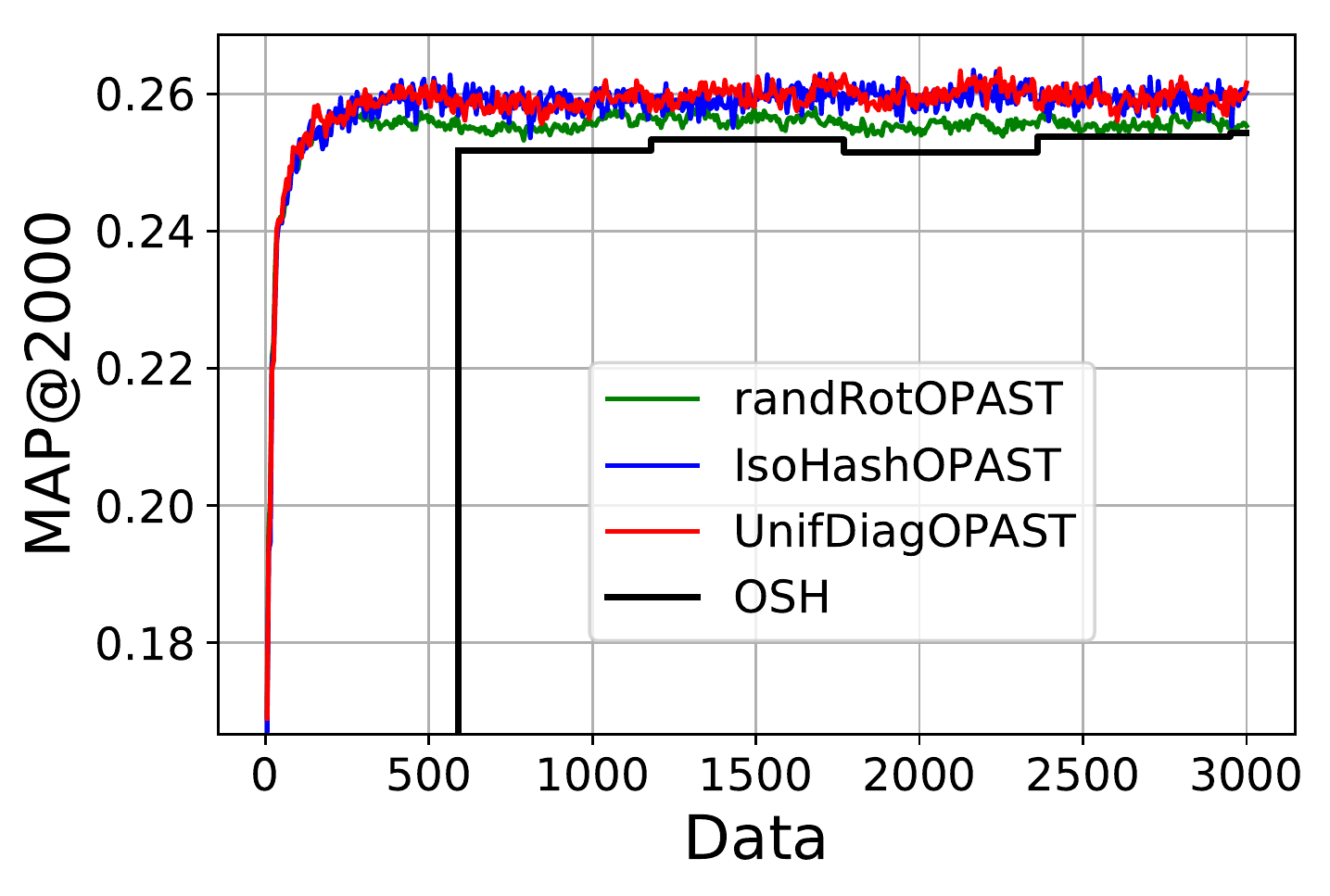}
  \caption{$c = 32$}
  \label{subfig:online_CIFAR_32}
\end{subfigure}%
\begin{subfigure}{0.25\textwidth}
  \centering
  \includegraphics[width=\linewidth]{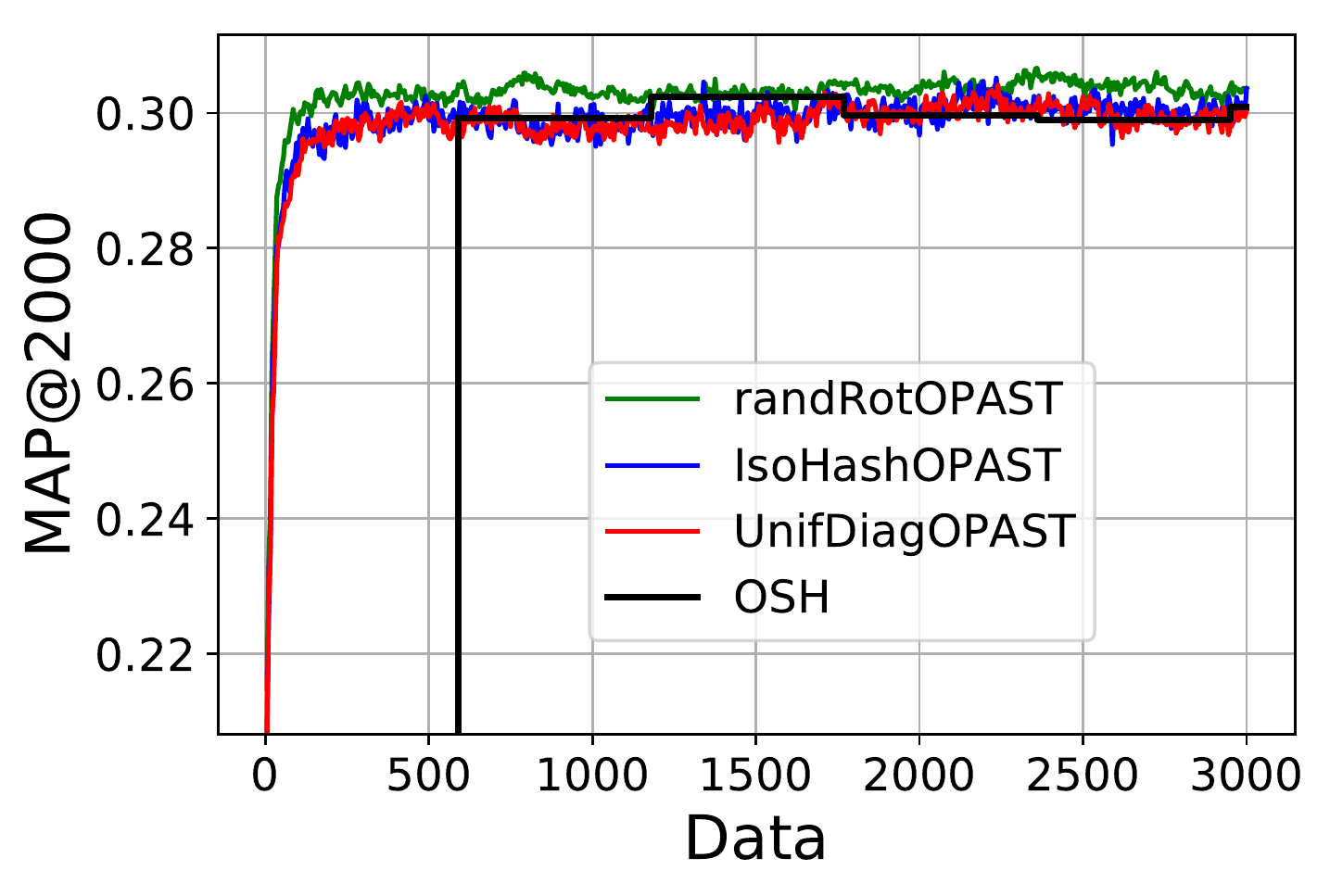}
  \caption{$c = 64$}
  \label{subfig:online_CIFAR_64}
\end{subfigure}
\caption{MAP@2000 in the online setting for different code length and CIFAR.}
\label{fig:online_comparison_CIFAR}
\end{figure*}

\begin{figure*}
\centering
\begin{subfigure}{0.25\textwidth}
  \centering
  \includegraphics[width=\linewidth]{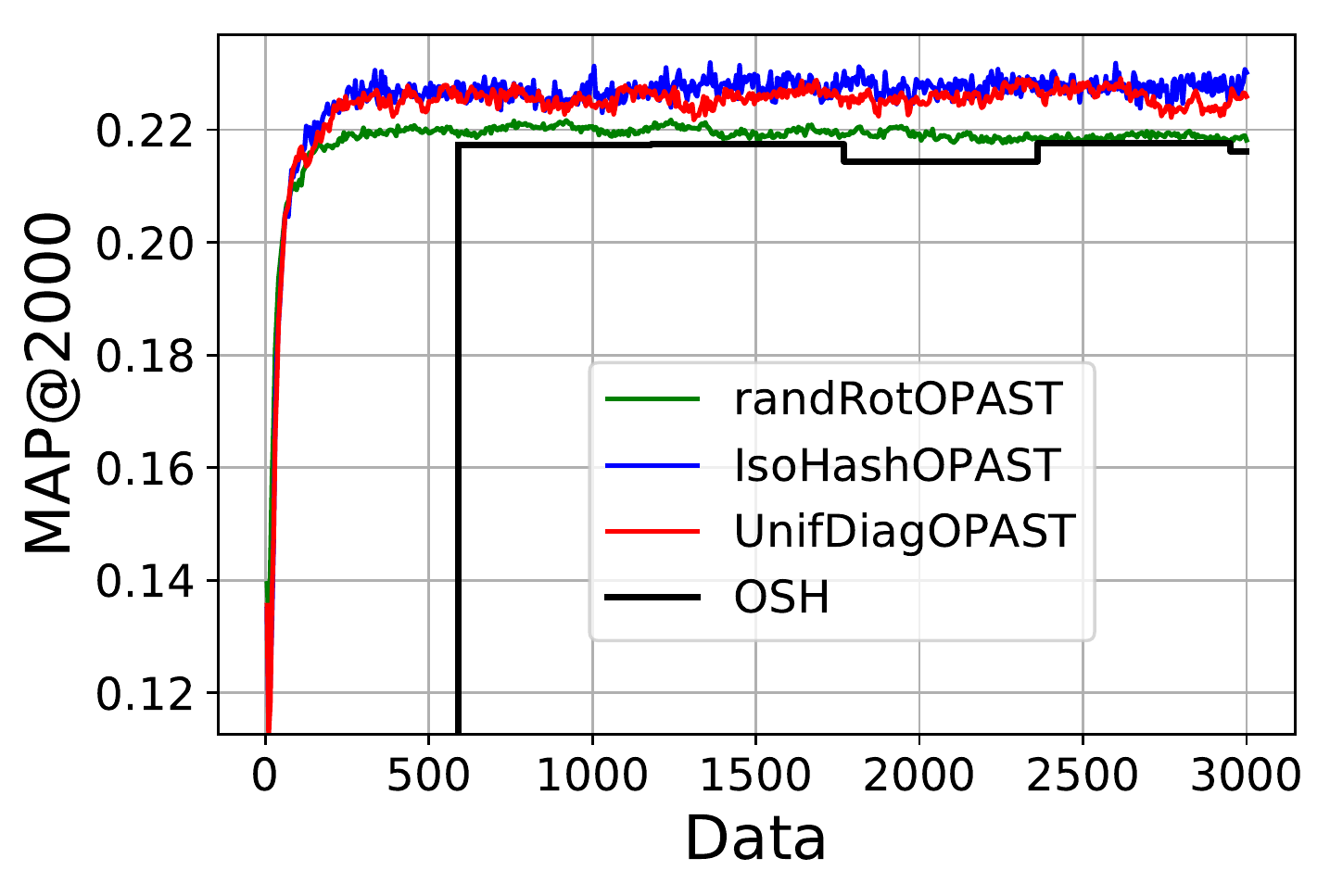}
  \caption{$c = 8$}
  \label{subfig:online_GIST_8}
\end{subfigure}%
\begin{subfigure}{0.25\textwidth}
  \centering
  \includegraphics[width=\linewidth]{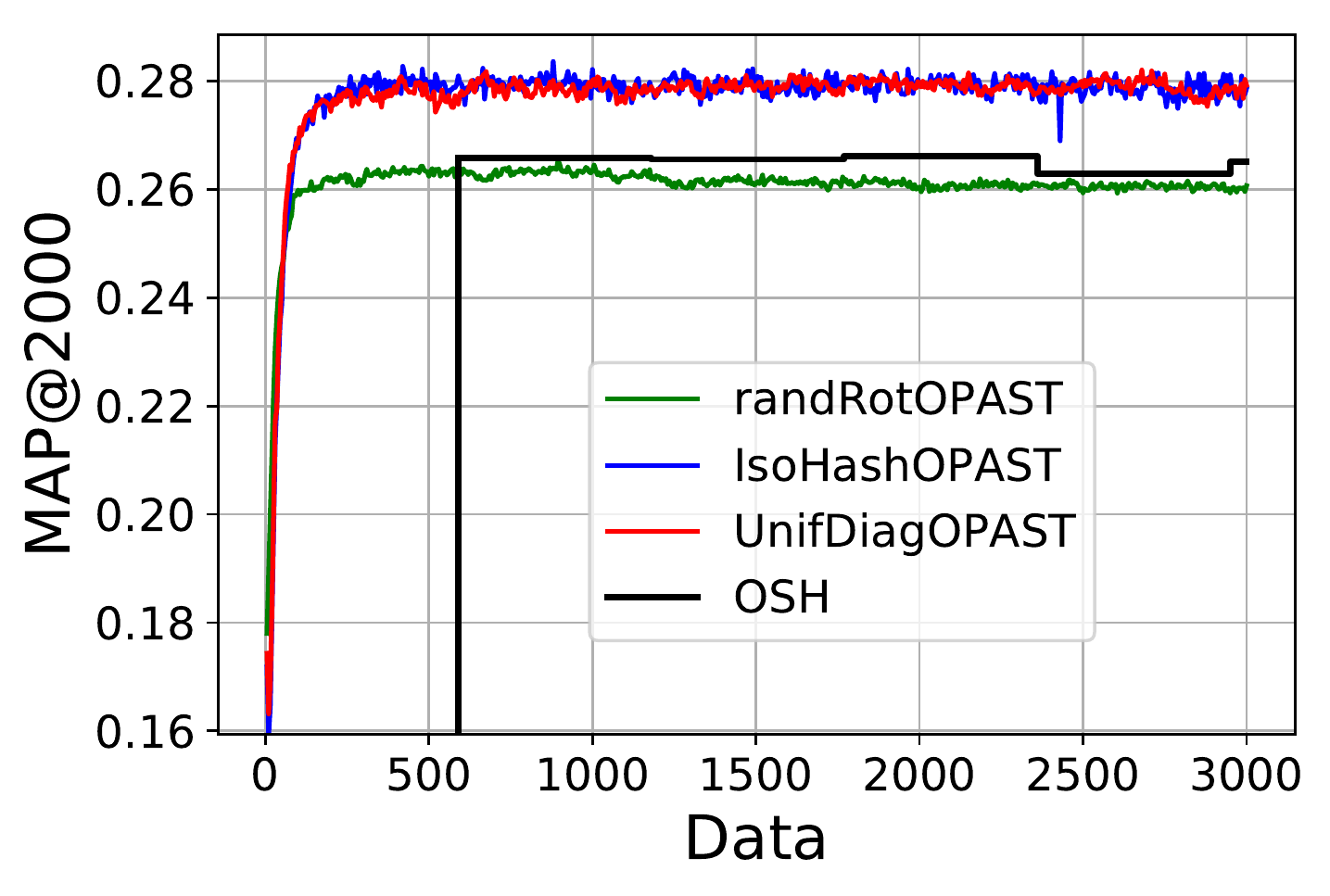}
  \caption{$c = 16$}
  \label{subfig:online_GIST_16}
\end{subfigure}%
\begin{subfigure}{0.25\textwidth}
  \centering
  \includegraphics[width=\linewidth]{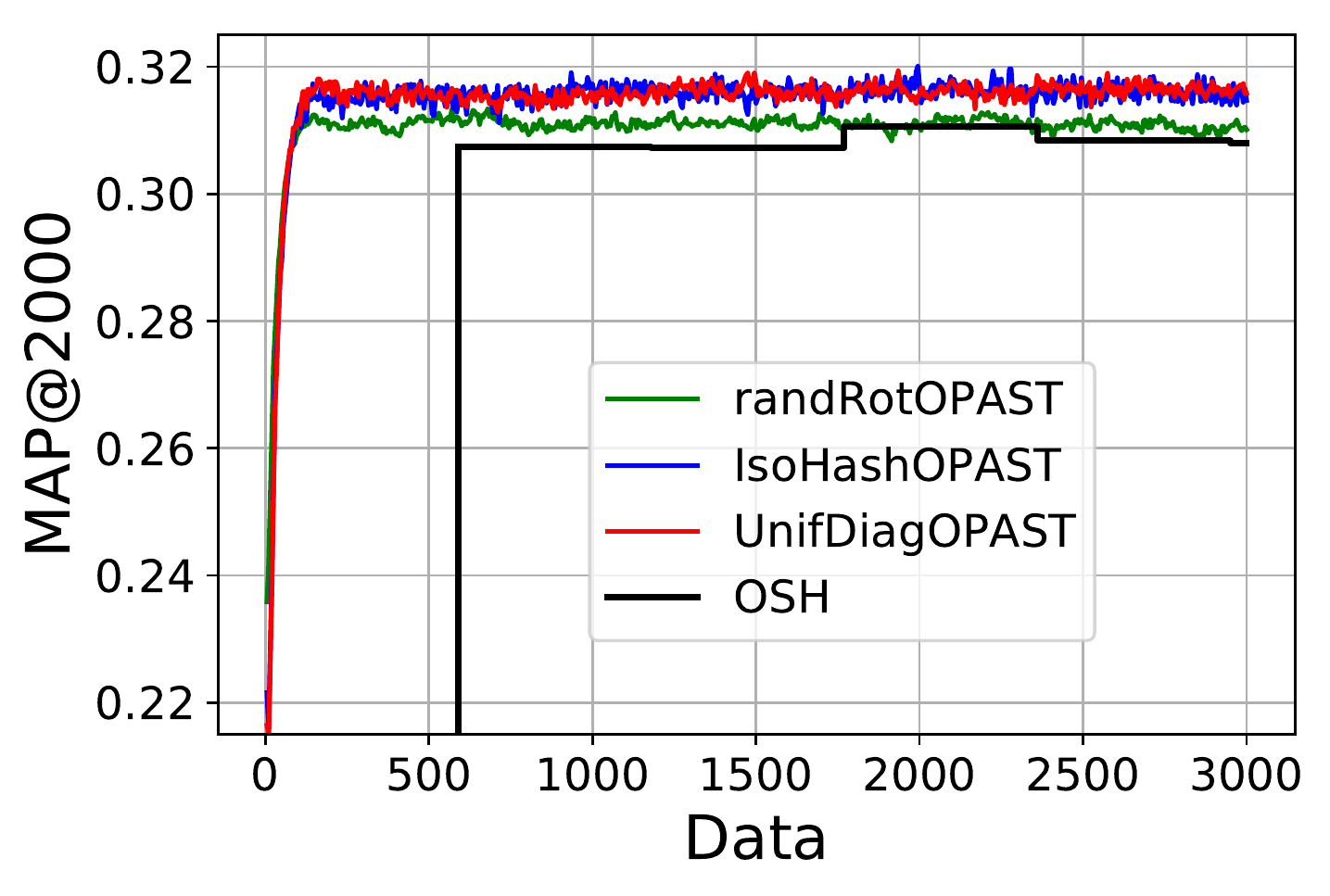}
  \caption{$c = 32$}
  \label{subfig:online_GIST_32}
\end{subfigure}%
\begin{subfigure}{0.25\textwidth}
  \centering
  \includegraphics[width=\linewidth]{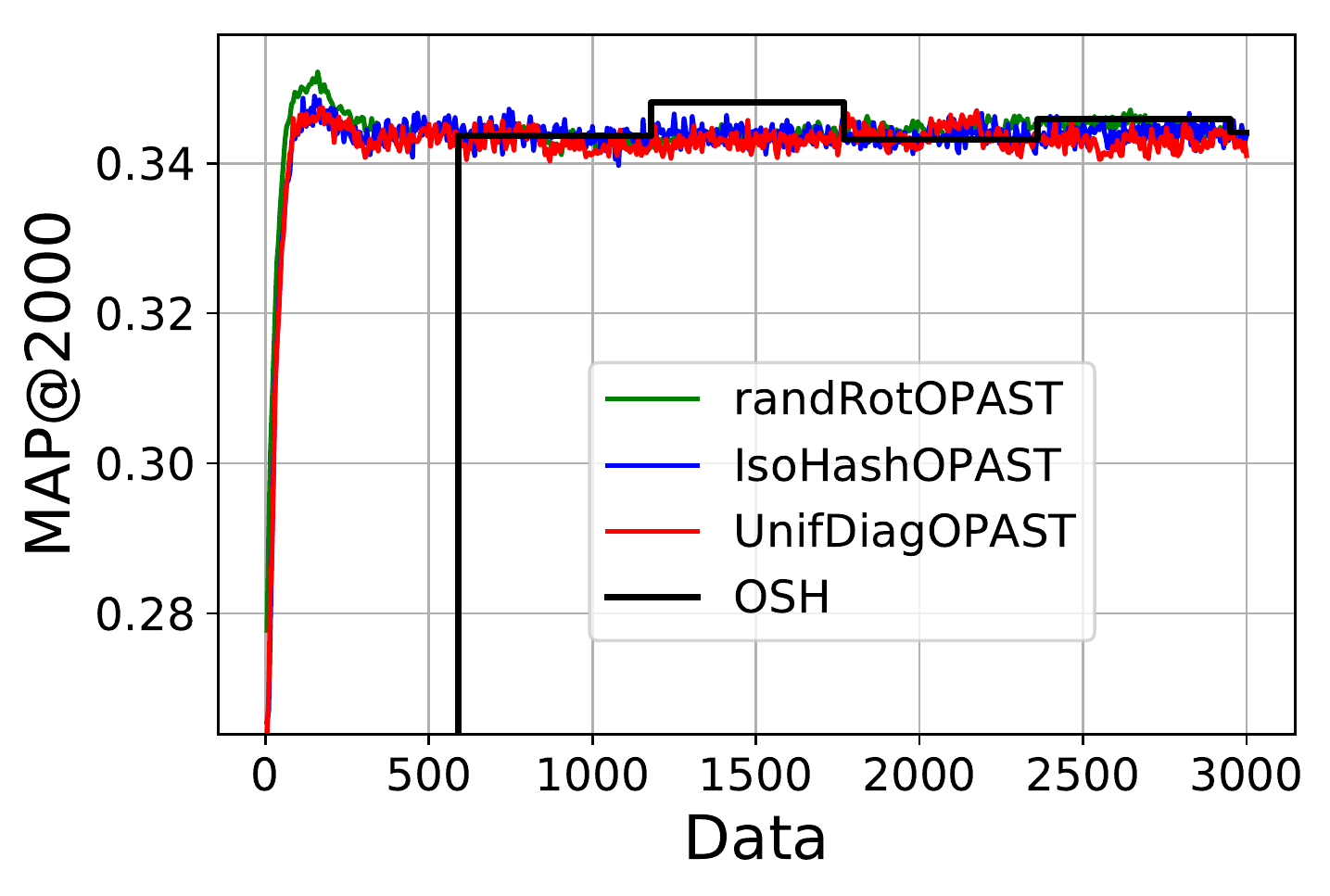}
  \caption{$c = 64$}
  \label{subfig:online_GIST_64}
\end{subfigure}
\caption{MAP@2000 in the online setting for different code length and GIST.}
\label{fig:online_comparison_GIST}
\end{figure*}

\subsection{Effect of rotation on binary sketches}

In this Section, in the offline context, the experiments shed light on the theory above by showing why a rotation gives better binary sketches than simply PCA projection. Rotations considered are random or learned from ITQ, IsoHash and UnifDiag. 
First, for CIFAR and GIST datasets, we compute the cumulative distribution function of $\mathbb{P}[ | y^{(i)}_t | < \epsilon]$, i.e. the probability for all $t \in [n]$ of $y_t \in \mathbb{R}^{c}$ to have entries near zero before and after the rotation application. Fig.~\ref{fig:proba_near_zero} plots $\mathbb{P}[ | y^{(i)}_t | < \epsilon]$ for $c = 32$ (averaged on $5$ runs). Similar results are obtained for other code lengths. For all rotation-based methods, this probability is always lower similarly than the one associated to only PCA projection.

\begin{figure*}
\centering
\begin{subfigure}{0.45\textwidth}
  \centering
  \includegraphics[width = 0.9876\linewidth]{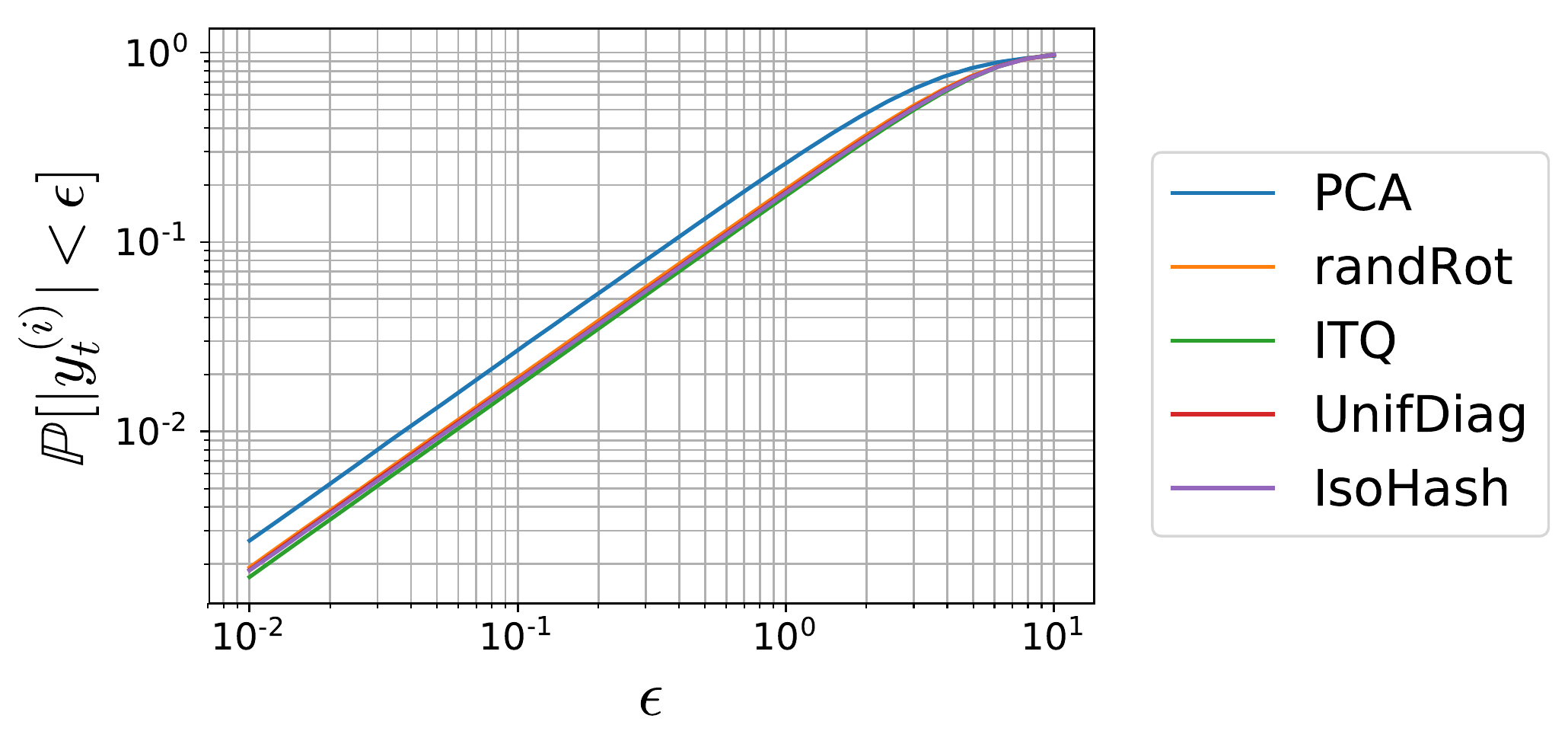}
  \caption{CIFAR}
   \label{subfig:proba_near_zero_CIFAR}
\end{subfigure}
\begin{subfigure}{0.45\textwidth}
  \centering
  \includegraphics[width = 0.9876\linewidth]{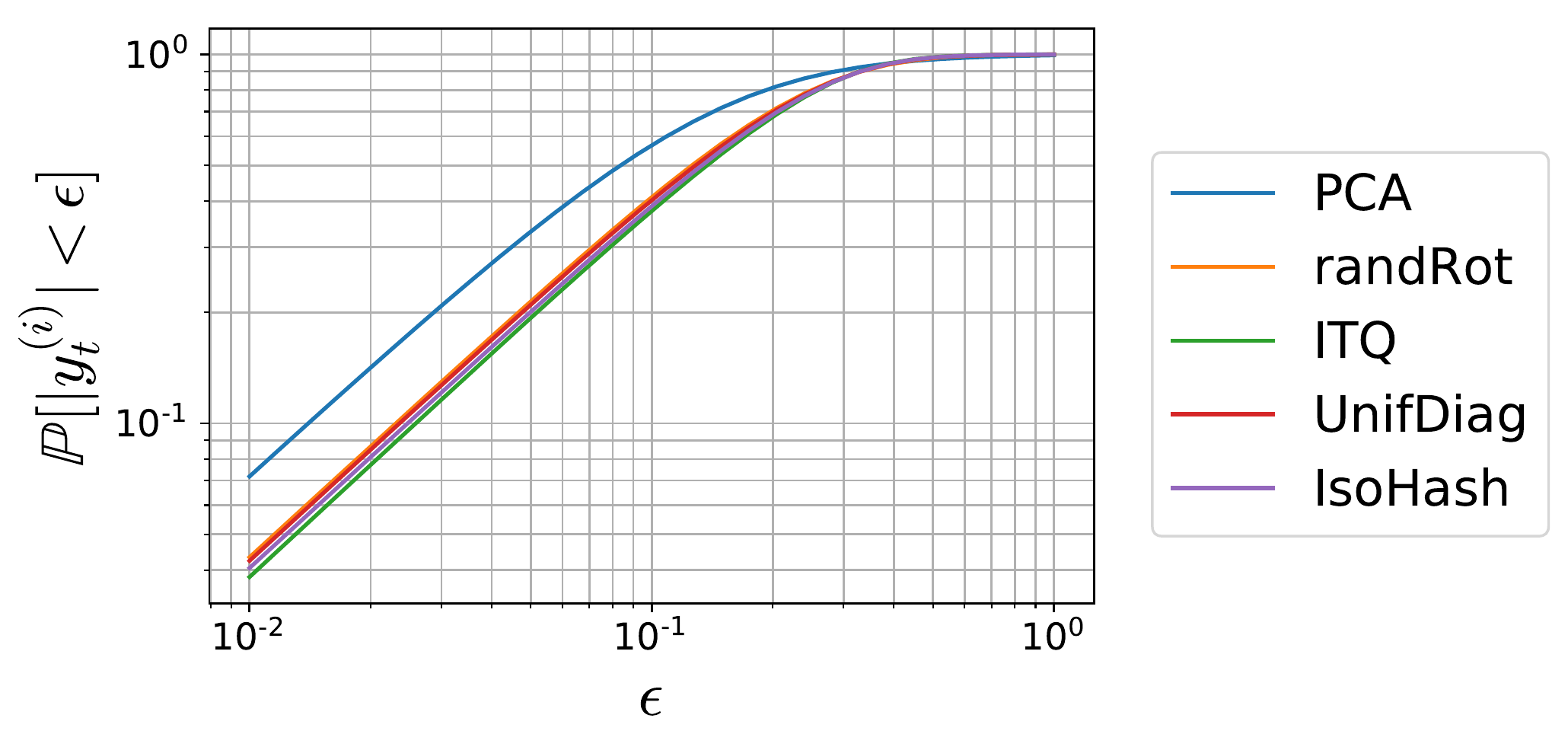}
  \caption{GIST}
   \label{subfig:proba_near_zero_GIST}
\end{subfigure}%
\caption{Cumulative distribution function for CIFAR and GIST and different hashing methods: $\forall i \in [c], \mathbb{P}[ |y^{(i)}_t| < \epsilon]$ for $c = 32$.}
  \label{fig:proba_near_zero} 
\end{figure*}

Secondly, we provide a visualization of the rotation efficiency in the clustering task. This experiment is made on simulated data since a ground truth partition is required.
We consider $C$ equally distributed clusters of $n$ data points such that nearest neighbors of a data point are points from the same cluster. We choose the centroids from these clusters randomly. The expected result is a small variance of the binary codes within the same cluster. Fig.~\ref{fig:bar_code_gaussian_blobs} displays the binary sketches obtained with and without rotation for $C = 6$, $n = 6000$ points with $d = 960$ and $c = 32$. Each column is a binary sketch: a yellow case represents a bit equal to $1$ and a red pixel stands for $-1$. Each cluster, delimited by a blue vertical line, contains $1000$ points plotted in order.
An interpretation of the results is that the rotation tends to move data away from the orthant since more binary sketches after application of the rotation have bits in common. This is illustrated by the obtained blocks of the same color, as opposed to the ``blurry" visualization implied by the PCA alone.
\begin{figure*}[p]
\centering
\begin{subfigure}{0.3\textwidth}
  \centering
  \includegraphics[width=0.9875\linewidth]{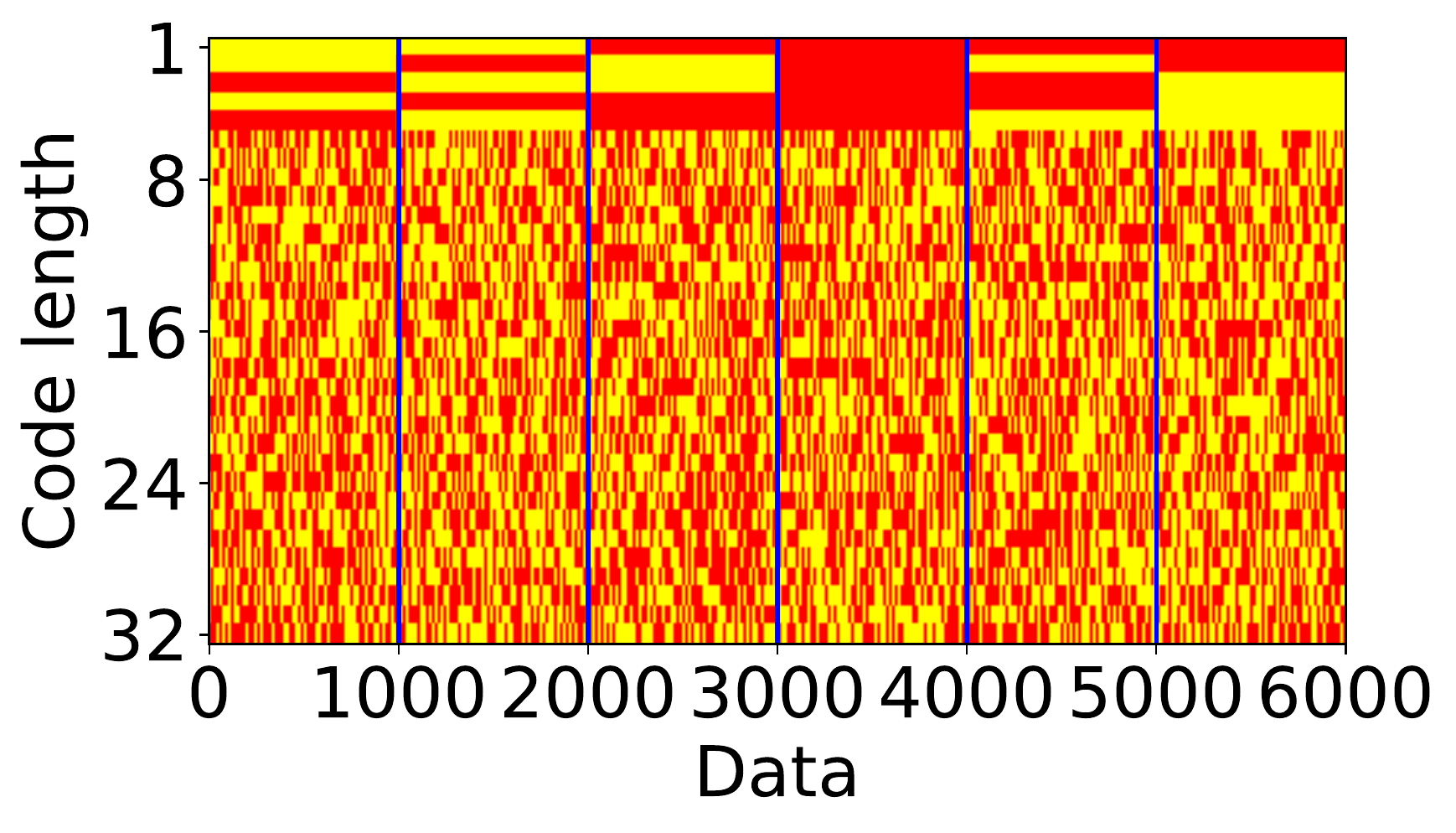}
  \caption{PCA}
  \label{subfig:gaussian_blobs_bar_code_init_c32_seed1}
\end{subfigure}
\begin{subfigure}{0.3\textwidth}
  \centering
  \includegraphics[width=0.9875\linewidth]{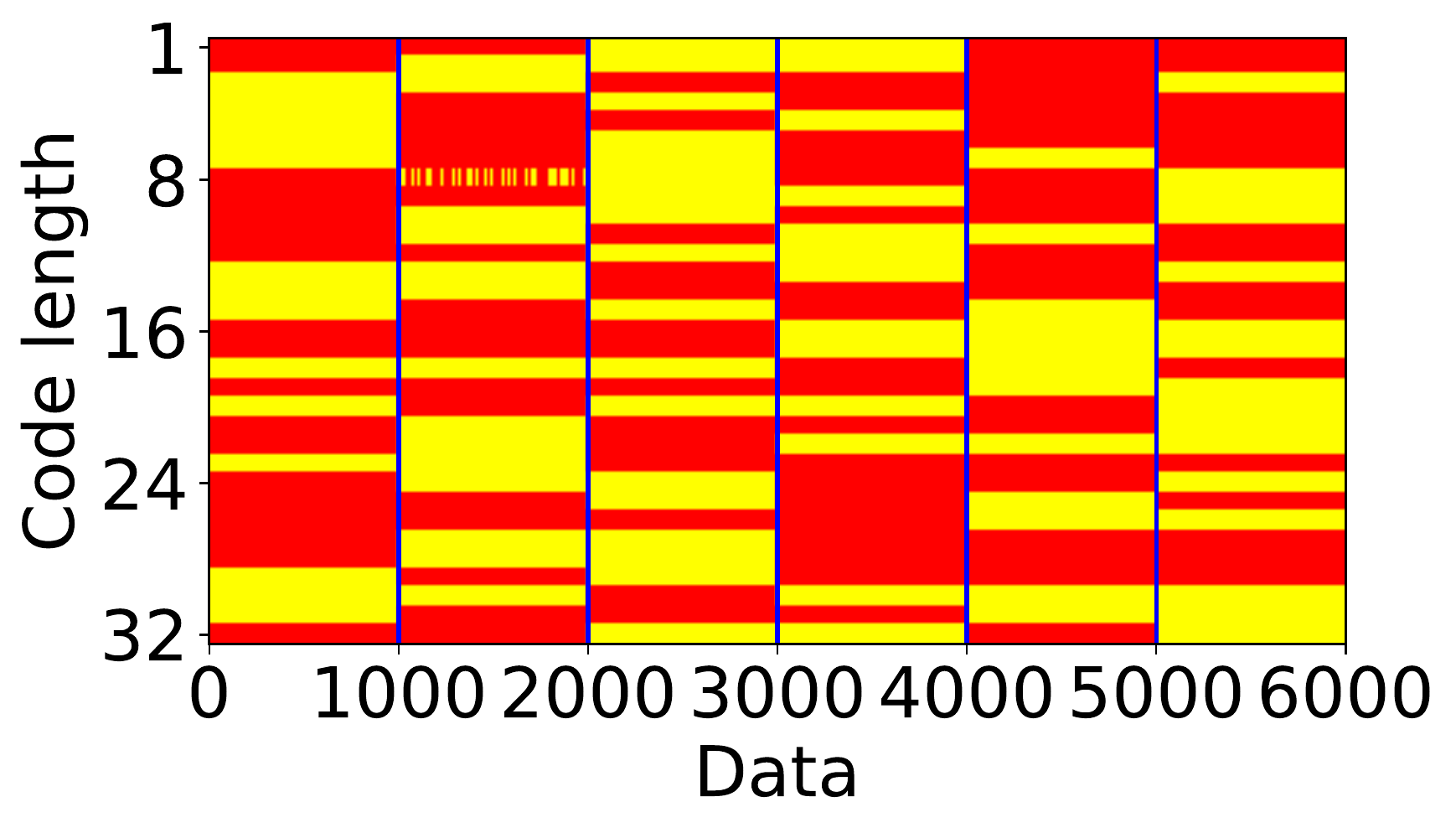}
  \caption{RandRot}
\label{subfig:gaussian_blobs_bar_code_rot_randRot_c32_seed1}
\end{subfigure}
\begin{subfigure}{0.3\textwidth}
  \centering
  \includegraphics[width=0.9875\linewidth]{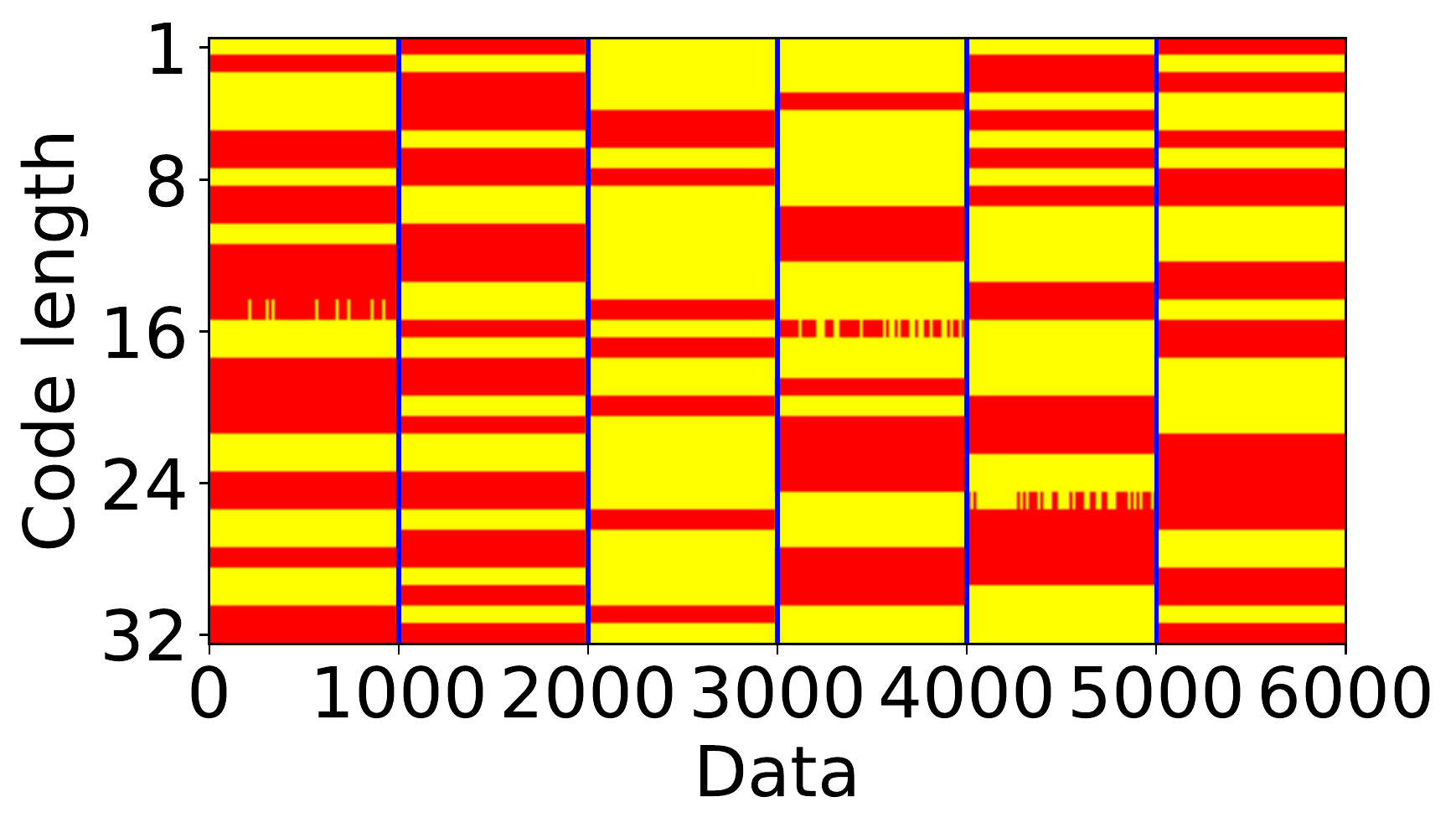}
  \caption{ITQ}
  \label{subfig:gaussian_blobs_bar_code_rot_ITQ_c32_seed1}
\end{subfigure}
\begin{subfigure}{0.3\textwidth}
  \centering
  \includegraphics[width=0.9875\linewidth]{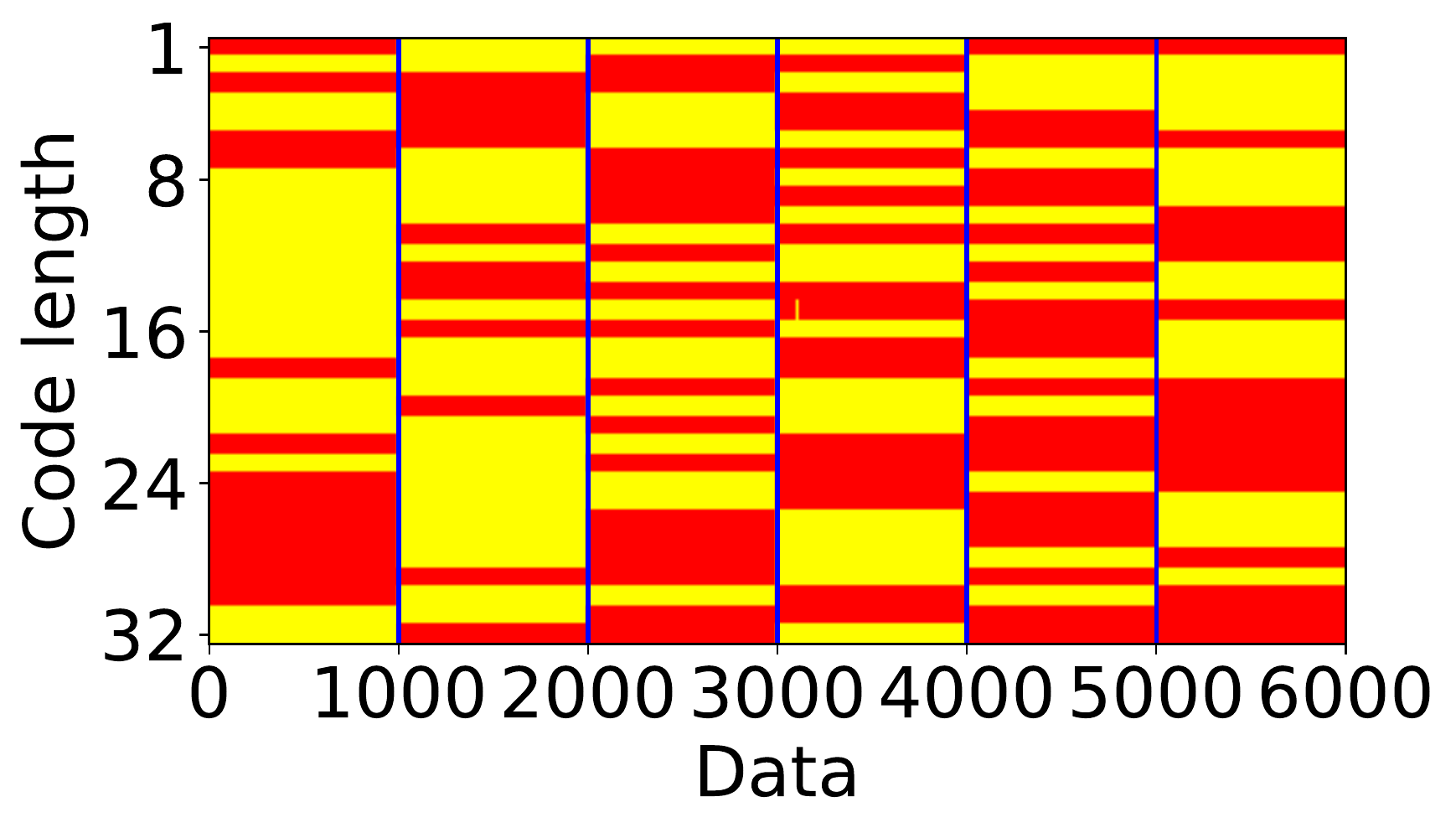}
  \caption{IsoHash} \label{subfig:gaussian_blobs_bar_code_rot_IsoHash_c32_seed1}
\end{subfigure}
\begin{subfigure}{0.3\textwidth}
  \centering
  \includegraphics[width=0.9875\linewidth]{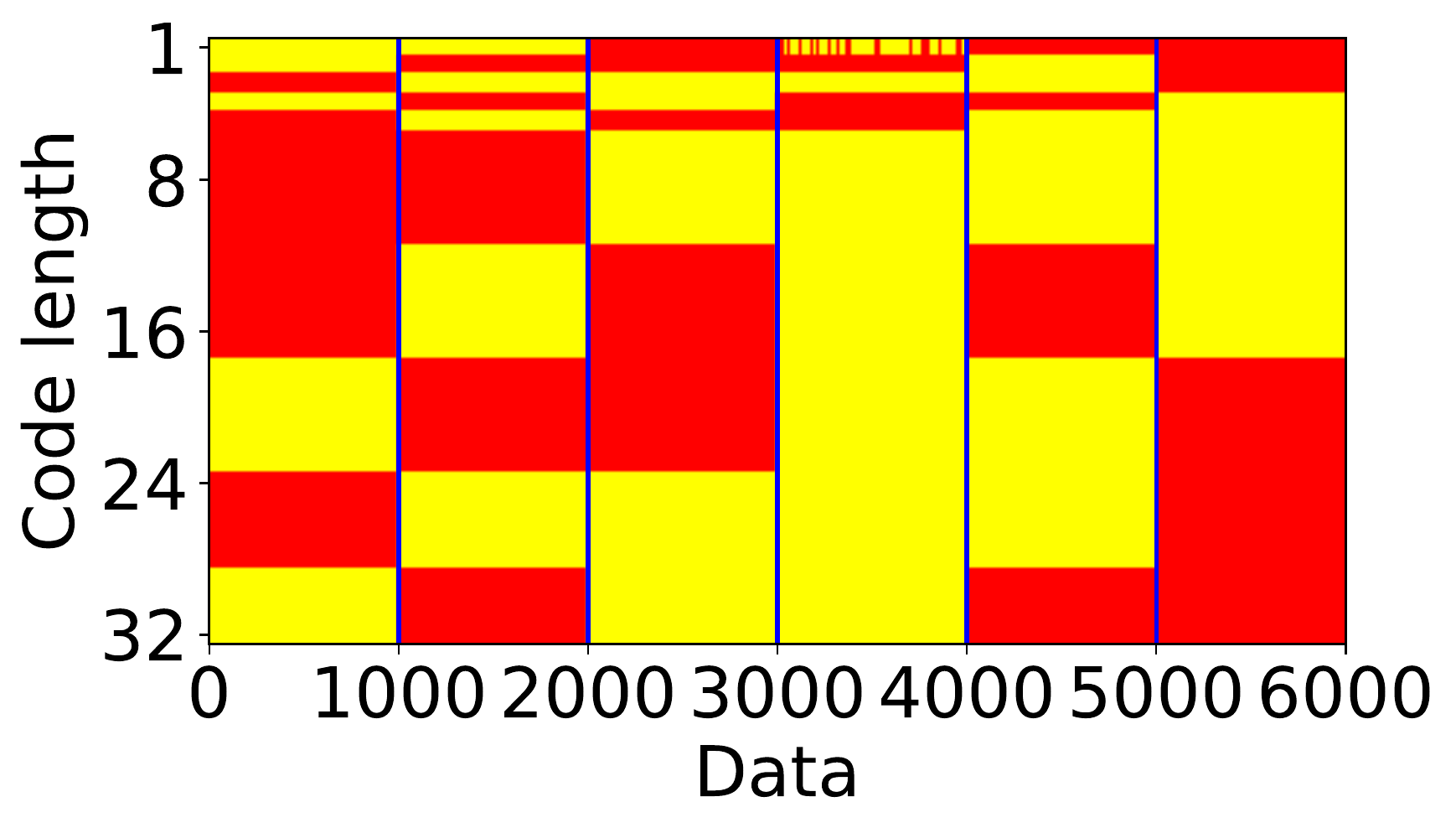}
  \caption{UnifDiag}
\label{subfig:gaussian_blobs_bar_code_rot_UnifDiag_c32_seed1}
\end{subfigure}  
\caption{Effect of rotation on binary sketches on simulated data: $6$ clusters with  $d = 960$ and $c= 32$.}
\label{fig:bar_code_gaussian_blobs}
\end{figure*}
More quantitatively, Table~\ref{table:mean_var_all} compiles the variance of the binary sketches averaged on the $6$ clusters for $10$ partitions. Not surprisingly, PCA gives the worst results: the high variance in the binary sketches explains the ``blurry" previous visualization. Conversely, all rotation-based methods tend to reduce the variance in the binary sketches. 

\begin{table}[t]
\caption{Mean variance for the binary sketches (avg. on  $10$ runs) obtained for $6$ convex clusters with random centroids in $d = 960$.}
\label{table:mean_var_all}
\begin{center}
\begin{small}
\begin{sc}
\begin{tabular}{lccccr}
\toprule
 & $8$ & $16$ & $32$ & $64$ \\
\midrule                 
PCA      & $7.1 \times 10^{-2}$ & $6.5 \times 10^{-2}$ & $4.0 \times 10^{-2}$ & $2.2 \times 10^{-2}$ \\
randRot  & $3.9 \times 10^{-4}$ & $2.6 \times 10^{-4}$ & $1.9 \times 10^{-4}$ & $7.4 \times 10^{-5}$ \\
ITQ      & $0.0$ & $2.0 \times 10^{-4}$ & $1.1 \times 10^{-4}$ & $1.3 \times 10^{-4}$ \\
UnifDiag & $4.1 \times 10^{-4}$ & $2.2 \times 10^{-4}$ & $2.5 \times 10^{-4}$ & $1.1 \times 10^{-4}$ \\
IsoHash  & $1.4 \times 10^{-4}$ & $3.3 \times 10^{-4}$ & $2.1 \times 10^{-4}$ & $1.3 \times 10^{-4}$ \\
\bottomrule
\end{tabular}%
\end{sc}
\end{small}
\end{center}
\vskip -0.1in
\end{table}

\section{Conclusion} 
State-of-the-art unsupervised hypercubic quantization hashing methods preprocess data with Principal Component Analysis and then rotate them data to balance variance over the different directions. It has been shown to give better sketches accuracy in the nearest neighbors search task but to the best of our knowledge, this is the first time that theoretical guarantees are provided. In particular, rotations uniformizing the diagonal of the data covariance matrix are interesting since they enable to deploy the sketching algorithm in the streaming setting. As there exists possibly an infinity of rotations uniformizing the covariance matrix diagonal, further investigation would be to evaluate among them which ones perform better.  

\bibliography{on_the_needs_biblio}
\bibliographystyle{abbrvnat}

\end{document}